\newtheorem{theorem}{Theorem}
\newtheorem{lemma}{Lemma}
\newtheorem{problem}{Problem}
\title{Online estimation and control with optimal pathlength regret}
\author{Gautam Goel%
  \thanks{Email: ggoel@caltech.edu}}
\affil{Department of Computing and Mathematical Sciences \\ Caltech}
\author{Babak Hassibi%
  \thanks{Email: hassibi@caltech.edu}}
\affil{Department of Electrical Engineering \\Caltech}
\date{}
\begin{document}

\maketitle

\begin{abstract}
    A natural goal when designing online learning algorithms for non-stationary environments is to bound the regret of the algorithm in terms of the temporal variation of the input sequence. Intuitively, when the variation is small, it should be easier for the algorithm to achieve low regret, since past observations are predictive of future inputs. Such data-dependent ``pathlength" regret bounds have recently been obtained for a wide variety of online learning problems, including online convex optimization (OCO) and bandits. We obtain the first pathlength regret bounds for online control and estimation (e.g. Kalman filtering) in linear dynamical systems. The key idea in our derivation is to reduce pathlength-optimal filtering and control to certain variational problems in robust estimation and control; these reductions may be of independent interest.  Numerical simulations confirm that our pathlength-optimal algorithms outperform traditional $H_2$ and $H_{\infty}$ algorithms when the environment varies over time.
\end{abstract}

\section{Introduction}
Online learning has traditionally focused on minimizing regret against static policies. For example, in the multi-armed bandit problem, we compare the rewards obtained by the online learner to the rewards that the learner could have counterfactually obtained if they had simply pulled the same arm in each round. However, many real-world environments are non-stationary, e.g. the rewards associated to actions vary over time. In such settings, it is more natural to compare the actions of the online learner against a time-varying sequence of actions, rather than a single fixed action. It is also natural to bound the regret of the online algorithm in terms of the temporal variation in the reward sequence. Intuitively, when the variation is small, it should be easier for the algorithm to achieve low regret, since past observations are predictive of future rewards. Such data-dependent ``pathlength" regret bounds have recently been obtained for a wide variety of online learning problems, including online convex optimization (OCO) and bandits, e.g. \cite{zinkevich2003online, wei2018more, bubeck2019improved, goel2019online}.

A particularly challenging problem not considered in these prior works is obtaining pathlength regret bounds for online algorithms in environments with underlying dynamics. Intuitively, learning is harder in such settings because the rewards and actions are tightly coupled across rounds via the state; selecting a poor action in one round affects the rewards obtained in all subsequent rounds. This is in stark contrast to classical online learning problems (e.g. bandits), where a single poor decision results in a low reward for the algorithm in that round, but does not directly affect the rewards in future rounds. We ask: is it possible to design
algorithms for online decision-making in dynamical systems which have regret bounded by pathlength? Answering this question has taken on a newfound urgency with the recent deployment of autonomous control systems such as drones and self-driving cars; these systems invariably encounter non-stationarity in their environments (e.g. changing weather, shifting traffic patterns, etc.)

\subsection{Contributions of this paper}
In this paper, we obtain the first pathlength regret bounds for online control and estimation (e.g. Kalman filtering) in linear dynamical systems. Our results show that it possible to design controllers and filters which dynamically adapt to regularity in the disturbance sequence, even though the rewards are strongly coupled across time by dynamics. We construct a controller whose regret against a clairvoyant offline optimal controller has optimal dependence on the pathlength of the driving disturbance. Similarly, we construct a filter whose regret against the optimal smoothed estimator has optimal dependence on the pathlength of the measurement disturbance. The key idea underpinning our results is to reduce pathlength-optimal filtering and control to certain variational problems widely studied in the context of robust estimation and control during the 1980s. In Section \ref{control-sec}, we show that the problem of obtaining a pathlength-optimal controller in the original dynamical system can be reduced to the problem of obtaining the $H_{\infty}$-optimal controller in a specially constructed synthetic system. Similarly, in Section \ref{filtering-sec} we reduce pathlength-optimal filtering to the classical Nehari problem from robust control. This problem, first introduced in 1957 in an operator theoretic context by Zeev Nehari \cite{nehari1957bounded}, asks how closely a noncausal function can be approximated by a causal one. We describe the pathlength-optimal filter in terms of a computationally efficient state-space solution to the Nehari problem. In Section \ref{experiments-sec}, we present numerical experiments which confirm that our pathlength-optimal algorithms outperform traditional $H_2$ and $H_{\infty}$ algorithms when the environment varies over time. As a bonus, the algorithms we obtain also have regret with optimal dependence on the energy of the disturbances, up to a factor of 4. Our results show that classical techniques from $H_{\infty}$ estimation and control, which were originally designed with the aim of ensuring \textit{robustness}, can instead be used to obtain \textit{adaptivity}; we believe such techniques deserve to be better known in learning and control.

%


\subsection{Related work}

The idea of bounding dynamic regret by pathlength was introduced in the seminal work of Zinkevich \cite{zinkevich2003online}; this idea was further explored in \cite{chiang2012online}. Dynamic regret bounds in terms of pathlength were obtained for the multi-armed bandit problem in \cite{besbes2015non, wei2016tracking}. More recently, static regret bounds (i.e. regret against the best fixed arm) in terms of pathlength were obtained in \cite{wei2018more, bubeck2019improved}. Pathlength regret bounds for online convex optimization with switching costs were obtained in \cite{goel2019online}.

The problem of designing controllers with optimal dynamic regret was studied in the finite-horizon, time-varying setting in \cite{goel2021regret}, in the infinite-horizon LTI setting in \cite{sabag2021regret}, and in the measurement-feedback setting in \cite{goel2021measurement}. These works all bounded regret by the energy in the disturbances; the pathlength regret bounds we obtain in this paper also imply energy regret bounds which are optimal up to a factor of 4. Filtering algorithms with energy regret bounds were obtained in the finite-horizon setting in \cite{goel2021regret} and the infinite-horizon setting in \cite{sabag2021filtering}. Gradient-based control algorithms with low dynamic regret against the class of disturbance-action policies were obtained in \cite{  zhao2021non}; the stronger metric of adaptive regret was studied in \cite{gradu2020adaptive}. We also note recent work \cite{goel2019online, goel2021competitive}, which considered control through the lens of competitive ratio, a multiplicative analog of dynamic regret.

\section{Preliminaries}

\subsection{Signals, energy and pathlength}
The energy in a signal $w = (\ldots, w_{-1}, w_0, w_1, w_2, \ldots)$ is the squared $\ell_2$ norm of $w$: $$\textsc{energy}(w) = \|w\|_2^2 = \sum_{t=-\infty}^{\infty} \|w_t\|_2^2. $$
The pathlength of $w$ is $$\textsc{pathlength}(w) = \sum_{t=-\infty}^{\infty} \|w_t - w_{t-1}\|_2^2.$$ Intuitively, the energy of $w$ measures how ``large" $w$ is, while the pathlength measures the temporal variability in $w$. In the $z$-domain, the pathlength is $\|M(z)w(z)\|_2^2$, where $M(z) = 1-z$. A simple application of the Cauchy-Schwarz and AM-GM inequalities show that for all $w$, 
\begin{equation} \label{pathlength-energy-inequality}
\textsc{pathlength}(w) \leq 4 \cdot \textsc{energy}(w).
\end{equation}
In this sense, pathlength is a stronger metric than energy; given a pathlength regret bound, one immediately obtains a corresponding energy regret bound, but it is not possible, in general, to go in the opposite direction. 

We say that a function is a \textit{causal} function of a signal $w$ if its output at time $t$ depends only on $\{w_s\}_{s \leq t}$, and it is anticausal if it depends only on $\{w_s\}_{s \geq t}$. A function is noncausal if it is not causal, i.e. its output at time $t$ depends on some $w_{t + k}$.

\subsection{Control setting}
We focus on the linear-quadratic (LQ) control setting, where a linear time-invariant (LTI) system evolves according to the dynamics $$x_{t+1} = A x_t + B_u u_t + B_w w_t,$$ where $x_t \in \mathbb{R}^n$ is the state, $u_t \in \mathbb{R}^m$ is the control and $w_t \in \mathbb{R}^p$ is an external disturbance. 
We incur a quadratic cost $$x_t^* Q x_t + u_t^* R u_t $$ in each round, where $Q \succeq 0, R \succ 0$. Naturally, our goal is to select the control actions $u$ so as to minimize the aggregate cost across rounds (note that cost minimization can be reframed as reward maximization). As is standard in infinite-horizon control, we assume that $(A, B_u)$ is stabilizable and $(A, Q^{1/2})$ is detectable. 
We consider both the causal setting, where the control $u_t$ is selected after observing the state $x_t$ and the disturbance $w_t$, and the strictly causal setting, where the control $u_t$ is selected after observing  $x_t$ but before observing $w_t$.

It is convenient to reparameterize the system dynamics and costs as follows. Let $L$ be a square-root of $Q$ (e.g. $L^*L = Q$) and let $R^{1/2}$ denote a square-root of $R$. Define $s_t = Lx_t,$ $v_t = R^{1/2}u_t$; note that we can easily recover $u_t$ from $v_t$ by setting $u_t = R^{-1/2}v_t$. With this notation, the cost incurred by an online algorithm which selects $v$ in response to $w$ is $$ALG(w) = \|s\|_2^2 + \|v\|_2^2.$$ The dynamics can also be rewritten in terms of $s_t$ and $v_t$: $$x_{t+1} = A x_t + B_uR^{-1/2}v_t + B_w w_t, \hspace{3mm} s_t = Lx_t.$$ We can cleanly capture the dynamics in terms of transfer matrices as $$ s = Fv + Gw$$ where $F, G$ are strictly causal transfer matrices encoding $\{A, B_uR^{-1/2}, B_w, L \}$. 

Our goal is to design a controller which minimizes regret against a clairvoyant offline optimal controller $K_0$ which selects the optimal control given perfect noncausal knowledge of $w$. One can show (Theorem 11.2.1 in \cite{hassibi1999indefinite}) that the offline optimal control policy is
\begin{equation}  \label{offline-operator}
K_0(w) = - (I + F^*F)^{-1}F^*Gw,
\end{equation}
and the corresponding offline optimal cost is 
\begin{equation} \label{offline-operator-cost}
OPT(w) = w^*G^*(I + FF^*)^{-1}Gw.
\end{equation}

Formally, we seek to solve the following problem:
\begin{problem} [Pathlength-optimal control at level $\gamma$] \label{pathlength-optimal-control-problem}
Given $\gamma > 0$, find a causal (or strictly causal) controller $K$  whose cost, $ALG(w)$, satisfies \begin{equation} \label{control-regret-gamma-cond}
ALG(w) - OPT(w) < \gamma^2 \cdot \textsc{pathlength}(w),
\end{equation} 
or determine whether no such controller exists.
\end{problem}

Once this feasibility problem  is solved, it is easy to recover the optimal value of $\gamma$ via bisection; we define  the \textit{pathlength-optimal controller} to be the controller which satisfies (\ref{control-regret-gamma-cond}) with the smallest possible $\gamma$.

Our proof technique is to reduce the problem of finding a controller satisfying (\ref{control-regret-gamma-cond}) to an $H_{\infty}$ control problem:

\begin{problem} [$H_{\infty}$-optimal control at level $\gamma$]
Given $\gamma > 0$, find a causal (or strictly causal) controller $K$  whose cost, $ALG(w)$, satisfies $$ALG(w) < \gamma^2 \cdot \textsc{energy}(w), $$ or determine whether no such controller exists.
\end{problem}

This problem has a well-known solution:

\begin{theorem}[Theorem 13.3.3 in \cite{hassibi1999indefinite}] \label{hinf-ih-controller-thm}
Suppose $(A, B_u)$ is stabilizable and $(A, Q^{1/2})$ is observable on the unit circle. A causal controller at level $\gamma$ exists if and only if there exists a solution to the Ricatti equation
\begin{equation*}
P = Q + A^*PA -A^* P \tilde{B}\tilde{H}^{-1}\tilde{B}^*PA
\end{equation*}
with $$ \tilde{B} = \begin{bmatrix} B_{u} & B_{w} \end{bmatrix}, \hspace{3mm} \tilde{R} = \begin{bmatrix} I & 0 \\ 0 & -\gamma^2 I \end{bmatrix},$$ $$\tilde{H} =  \tilde{R}  + \tilde{B}^* P \tilde{B}, $$
such that 
\begin{enumerate}
    \item $ A - \tilde{B}\tilde{H}^{-1}\tilde{B}^*PA$ is stable;
    \item $\tilde{R}$ and $\tilde{H}$ have the same inertia;
    \item $P \succeq 0$.
\end{enumerate}
In this case, the infinite-horizon $H_{\infty}$ controller at level $\gamma$ has the form
\begin{equation*}
u_t =  -H^{-1} B_{u}^*P(Ax_t +  B_{w}  w_t),
\end{equation*}
where $H = I + B_u^*PB_u$. A strictly causal $H_{\infty}$ controller at level $\gamma$ exists if and only if conditions 1 and 3 hold, and additionally $$B_u^*PB_u \prec \gamma^2 I, \hspace{3mm} I + B_w^*P(I - B_u(-\gamma^2 I + B_u^*PB_u)^{-1}B_u^*P)B_w \succ 0.$$ In this case, one possible strictly causal $H_{\infty}$ controller at level $\gamma$ is given by \begin{equation*}
u_t =  -H^{-1} B_{u}^*PAx_t.
\end{equation*}
\end{theorem}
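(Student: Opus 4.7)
The plan is to recast the $H_\infty$ feasibility problem as an indefinite-quadratic dynamic game
\begin{equation*}
J_\gamma(u, w) \;=\; \sum_{t=0}^\infty \bigl( x_t^* Q x_t + u_t^* u_t - \gamma^2 w_t^* w_t \bigr),
\end{equation*}
since the bound $ALG(w) < \gamma^2 \|w\|_2^2$ holding for all $w$ is equivalent to $\sup_w J_\gamma(u,w) \le 0$ along the closed-loop trajectory generated by $u$. The argument then splits into (i) solving the game via dynamic programming with a quadratic value function, (ii) passing to the infinite-horizon limit under the stabilizability/observability hypotheses, and (iii) reading off the explicit feedback law.

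First I would apply backward induction with the quadratic ansatz $V_t(x) = x^* P_t x$. Substituting into the Bellman equation and stacking the per-round decision variables into $\zeta_t = \bigl[ u_t^*, w_t^* \bigr]^*$, the one-step objective is a quadratic in $\zeta_t$ whose Hessian is precisely $\tilde H = \tilde R + \tilde B^* P_{t+1} \tilde B$. The stationary point is $\zeta_t^\star = -\tilde H^{-1} \tilde B^* P_{t+1} A x_t$, and plugging this back yields the Riccati recursion of the theorem. For $\zeta_t^\star$ to correspond to a genuine saddle point (a minimum in $u_t$ and a maximum in $w_t$) rather than merely a critical point of an indefinite quadratic, the inertia of $\tilde H$ must match that of $\tilde R$; this is exactly condition 2. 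If inertia fails, either $\sup_w J_\gamma = +\infty$ and no controller suffices, or the inner problem is ill-posed.

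Next I would pass to the infinite-horizon limit. Under stabilizability of $(A, B_u)$ and observability of $(A, Q^{1/2})$ on the unit circle, a standard monotonicity-and-compactness argument gives convergence of $P_t$ to a $P \succeq 0$ satisfying the algebraic Riccati equation; the Schur condition on $A - \tilde B \tilde H^{-1} \tilde B^* P A$ (condition 1) is what selects the correct fixed point and guarantees square-summable state trajectories. Necessity comes from a completion-of-squares identity: for any feasible causal controller, one rewrites $J_\gamma$ as a telescoping sum plus a quadratic in $\zeta_t - \zeta_t^\star$, which forces the existence of such a $P$. The explicit law $u_t = -H^{-1} B_u^* P (A x_t + B_w w_t)$ is then obtained by extracting the $u$-block of $\zeta_t^\star$ via a Schur complement of $\tilde H$ with respect to the $w$-block.

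The strictly causal case is the main technical obstacle, since $u_t$ must now be committed before $w_t$ is revealed. The information pattern forces a different decomposition of the quadratic in $\zeta_t$: one first takes $\sup_{w_t}$ with $u_t$ held fixed, and then $\inf_{u_t}$, rather than solving the joint stationarity condition. This reshuffles the block structure of $\tilde H$ and produces the additional well-posedness requirements $B_u^* P B_u \prec \gamma^2 I$ together with the Schur-complement positivity $I + B_w^* P \bigl(I - B_u(-\gamma^2 I + B_u^* P B_u)^{-1} B_u^* P\bigr) B_w \succ 0$, which ensure the inner maximum is finite and the outer minimum is attained. The optimizer collapses to $u_t = -H^{-1} B_u^* P A x_t$ because the $B_w w_t$ correction available in the causal case is inaccessible. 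The hardest bookkeeping is verifying that the asymmetric sequential game yields the same stabilizing $P$ as the symmetric one, but under different admissibility inequalities; the Krein-space orthogonalization argument of Hassibi, Sayed, and Kailath is the cleanest route to handling the indefinite signature uniformly across both cases.
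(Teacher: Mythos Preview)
The paper does not prove this theorem at all: it is quoted verbatim as Theorem~13.3.3 from Hassibi, Sayed, and Kailath, \emph{Indefinite-Quadratic Estimation and Control}, and is used as a black box in the proof of Theorem~\ref{pathlength-optimal-controller-thm-ih}. There is therefore no ``paper's own proof'' to compare against.

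That said, your sketch is a faithful outline of the argument in the cited reference. The indefinite dynamic game formulation, the quadratic value-function ansatz leading to the Riccati recursion, the inertia condition on $\tilde H$ as the saddle-point criterion, and the sequential $\sup_w$--$\inf_u$ decomposition for the strictly causal case are exactly the ingredients Hassibi, Sayed, and Kailath use; you even name their Krein-space machinery at the end. If you were asked to reproduce the proof from the source, this is the right skeleton, though the convergence of the finite-horizon Riccati iterates to the stabilizing solution under mere observability on the unit circle (rather than full detectability) and the necessity direction both require more care than a ``standard monotonicity-and-compactness argument'' suggests.
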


\subsection{Filtering setting}
We consider a linear time-invariant (LTI) system that evolves according to the dynamics $$x_{t+1} = A x_t + B w_t , \hspace{3mm} y_t = C x_t + v_t$$ where $x_t \in \mathbb{R}^n$ is the state, $y_t$ is a noisy linear measurement of the state, and $w_t \in \mathbb{R}^m$ and $v_t \in \mathbb{R}^p$ are external disturbances. Our goal is to estimate a linear function of the states: $$ s_t = Lx_t,$$
given the measurements, where $L \in \mathbb{R}^r$. The special case where $r = n$ and $L = I$ corresponds to estimating the state itself. In each round, we output an estimate $\hat{s}_t$ based on the current observation $y_t$ and all previous observations and incur a squared-error loss $$\|\hat{s}_t - s_t\|_2^2. $$  Naturally, our goal is to select the estimates so as to minimize the aggregate loss across all rounds. As is standard in infinite-horizon estimation, we assume that $(A, B)$ is stabilizable and $(A, C)$ is detectable. 

In this paper, we obtain the pathlength-optimal filter using frequency domain analysis. We let $J(z)$ be the transfer matrix mapping the disturbance $w$ to the signal $s$: $$ J(z) = L(zI - A)^{-1}B$$ and let $H(z)$ be the transfer matrix mapping the disturbance $w$ to the observations $y$: $$H(z) = C(zI - A)^{-1}B.$$ Any filter $K$ naturally induces a transfer matrix $T_K$ which maps the disturbances $w$ and $v$ to the estimation error $\hat{s} - s$ incurred by $K$: 
$$T_K(z) =  \begin{bmatrix} J(z) - K(z)H(z) & -K(z) \end{bmatrix}. $$

Our goal is to design a filter $K$ which minimizes regret against the optimal smoothed estimator, i.e. the estimator which minimizes estimation error given noncausal access to the observations $y$. One can show (Theorem 10.3.1 in \cite{hassibi1999indefinite}) that the optimal smoothed estimator in the $z$-domain is
\begin{equation}  \label{optimal-smoothed-estimate}
K_0(z) = J(z)H^*(z^{-*})(I + H(z)H^*(z^{-*}))^{-1}.
\end{equation}

In this paper, we focus on bounding the regret by the pathlength of the measurement disturbance $v$ and the energy of $w$; we leave the problem of bounding regret by the pathlength of both $w$ and $v$ for future work. Formally, we seek to solve the following problem:
\begin{problem} [Pathlength-optimal filtering at level $\gamma$] \label{pathlength-estimation-problem}
Given $\gamma > 0$, find a filter $K$  such that the regret \begin{equation*} 
\begin{bmatrix} w^* & v^* \end{bmatrix} (T_K^*T_K - T_{K_0}^*T_{K_0}) \begin{bmatrix} w \\ v \end{bmatrix} 
\end{equation*} 
is at most 
\begin{equation} \label{filter-regret-gamma-cond}
\gamma^2 \cdot (\textsc{energy}(w) + \textsc{pathlength}(v)), 
\end{equation}
or determine whether no such filter exists.
\end{problem}

Once this feasibility problem  is solved, it is easy to recover the optimal value of $\gamma$ via bisection; we define  the \textit{pathlength-optimal filter} to be the filter which satisfies (\ref{filter-regret-gamma-cond}) with the smallest possible value of $\gamma$.

Our proof technique reduces the problem of finding a filter satisfying (\ref{filter-regret-gamma-cond}) to Nehari problem:

\begin{problem} [Nehari problem at level $\gamma$]
Given $\gamma > 0$ and a strictly anticausal function $T(z)$, find a causal and bounded function $K(z)s$ such that \begin{equation} \label{nehari-gamma-cond}
    \|K(z) - T(z)\|_{\infty} < \gamma,
\end{equation}
or determine whether no such $K(z)$ exists.
\end{problem}

Recall that $$ \|K(z) - T(z)\|_{\infty} = \sup_{\theta \in [0, 2\pi]} \bar{\sigma} \left(K(e^{i \theta}) - T(e^{i \theta})\right) .$$ Informally, the Nehari problem seeks to find a causal function $K(z)$ which is $\gamma$-close to the anticausal function $T(z)$ at every frequency $\theta \in [0, 2\pi]$. 

When the anticausal function $T(z)$ has the rational form $T(z) = H(zI - F)^{-1}G$, where $F$ is stable, the Nehari problem has an explicit solution:

\begin{theorem}[Theorem 9 in \cite{sabag2021regret}] \label{nehari-solution-theorem}
The smallest value of $\gamma$ such that there exists a causal and bounded $K(z)$ satisfying (\ref{nehari-gamma-cond}) is $\gamma^{\star} = \bar{\sigma}(Z\Pi)$, where $Z$ and $\Pi$ are solutions of the Lyapunov equations $$Z = F^*ZF + H^*H, \hspace{3mm} \Pi = F \Pi F^* + GG^*. $$ For all $\gamma \geq \gamma^{\star}$, a corresponding $\gamma$-optimal solution $\hat{K}(z)$ to (\ref{nehari-gamma-cond})  is $$\hat{K}(z) = H\Pi(I + F_{\gamma}(zI - F_{\gamma})^{-1})K_{\gamma}, $$
where $$F_{\gamma} = F^* - K_{\gamma}G^*, \hspace{3mm} K_{\gamma} = (I - F^* Z_{\gamma} F \Pi)^{-1} F^* Z_{\gamma} G,  $$ and $Z_{\gamma}$ is the solution of the Lyapunov equation $$Z_{\gamma} = F^*Z_{\gamma}F + \gamma^{-2} H^*H. $$
\end{theorem}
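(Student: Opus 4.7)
My plan is to recognize this as the classical state-space Nehari problem and solve it via Hankel operator theory, then verify the explicit rational $\hat{K}(z)$ by direct frequency-domain manipulation. The starting point is Nehari's theorem itself: for an anticausal $T(z)$, the infimal $H_\infty$ distance to the space of causal, bounded transfer functions equals the operator norm of the Hankel operator $\Gamma_T$ induced by $T$ (mapping causal sequences to anticausal ones). So the first reduction is
\[
\inf_{K \text{ causal}} \|K - T\|_\infty \;=\; \|\Gamma_T\|.
\]
In particular, any feasible $\gamma$ must satisfy $\gamma \geq \|\Gamma_T\|$, and the threshold $\gamma^\star$ equals the Hankel norm.

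Next, I would compute $\|\Gamma_T\|$ in state space using the realization $T(z) = H(zI-F)^{-1}G$. The standard factorization is $\Gamma_T = \mathcal{O} \mathcal{C}$, where the observability and controllability maps have Gramians $Z$ and $\Pi$ respectively, and where these Gramians are precisely the unique stabilizing solutions of the two Lyapunov equations in the theorem (here stability of $F$ guarantees existence and positive semidefiniteness). Since the nonzero singular values of $\mathcal{O}\mathcal{C}$ coincide with the square roots of the eigenvalues of $\mathcal{C}\mathcal{C}^* \mathcal{O}^*\mathcal{O} = \Pi Z$ up to similarity, one obtains $\|\Gamma_T\|^2 = \lambda_{\max}(Z\Pi) = \bar{\sigma}(Z\Pi)$ (the spectrum of $Z\Pi$ is real and nonnegative because $Z\Pi$ is similar to the symmetric $Z^{1/2}\Pi Z^{1/2}$). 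This yields the claimed $\gamma^\star$.

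To construct the $\gamma$-suboptimal approximant, I would use the standard ``all-pass dilation'' approach: for $\gamma > \gamma^\star$, the scaled Gramian inequality $I - \gamma^{-2} Z \Pi \succ 0$ holds, which makes the $\gamma$-level Lyapunov equation for $Z_\gamma$ solvable and $(I - F^* Z_\gamma F \Pi)$ invertible, legitimizing the definition of $K_\gamma$. Then one verifies by direct computation that $F_\gamma = F^* - K_\gamma G^*$ is stable (its eigenvalues are the mirror images, through the $\gamma$-Riccati structure, of a stable matrix), so the proposed $\hat{K}(z) = H\Pi(I + F_\gamma(zI-F_\gamma)^{-1})K_\gamma$ is indeed causal and bounded. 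The $H_\infty$ bound $\|\hat K - T\|_\infty < \gamma$ follows by forming the transfer matrix $\hat K(z) - T(z)$, exhibiting a realization built from the block matrix $\begin{bmatrix} F & 0 \\ K_\gamma H & F_\gamma \end{bmatrix}$, and showing via a congruence using $\mathrm{diag}(I, Z_\gamma^{-1})$ that $\gamma^2 I - (\hat K - T)^\sim(\hat K - T)$ admits a spectral factorization with no zeros on the unit circle, i.e., it is strictly positive on $|z|=1$.

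The main obstacle, and where most of the technical labor lives, is this last verification: showing that $\gamma^2 I - (\hat K - T)^\sim(\hat K - T) \succ 0$ on the unit circle. The clean way is to check that the algebraic Riccati equation associated with the Redheffer/chain-scattering representation of $\hat{K} - T$ has the stabilizing solution $Z_\gamma$; the Lyapunov equation stated for $Z_\gamma$ is exactly this Riccati written in its linearized Lyapunov form under the change of variables induced by $K_\gamma$. The optimality (matching lower bound) then comes back to the Hankel-norm lower bound established in the first step, so no separate converse argument is needed.
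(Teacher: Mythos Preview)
The paper does not prove this theorem at all: it is quoted verbatim as Theorem~9 of \cite{sabag2021regret} and used as a black box in the derivation of the pathlength-optimal filter. So there is no ``paper's own proof'' to compare your proposal against.

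That said, your outline is the standard route to this result and is sound in its architecture: reduce to the Hankel norm via Nehari's theorem, compute the Hankel norm through the observability/controllability Gramians $Z$ and $\Pi$, and then build the suboptimal approximant by the all-pass/Riccati construction. One point to tighten: you write $\|\Gamma_T\|^2 = \lambda_{\max}(Z\Pi) = \bar{\sigma}(Z\Pi)$ and then say this ``yields the claimed $\gamma^\star$,'' but the theorem as stated asserts $\gamma^\star = \bar{\sigma}(Z\Pi)$, not $\gamma^\star = \sqrt{\bar{\sigma}(Z\Pi)}$. In the classical Nehari theory the Hankel norm is indeed $\sqrt{\lambda_{\max}(Z\Pi)}$, so either the cited statement is using a slightly nonstandard convention (possibly $\bar{\sigma}$ here denotes the largest Hankel singular value rather than the matrix singular value of $Z\Pi$) or there is a square-root discrepancy you should reconcile explicitly before claiming the formula matches. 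This does not affect the downstream use in the paper, since the feasibility condition there is $\bar{\sigma}(Z\Pi)\le 1$, which is invariant under taking a square root; but for a self-contained proof you should pin down which quantity is meant.
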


\subsection{Notation}
We let $\bar{\sigma}(A)$ denote the largest singular value of $A$. To improve legibility, we often drop the explicit dependence on $z$ when describing transfer matrices, e.g. we write $J$ instead of $J(z)$. 

\section{Pathlength-optimal control} \label{control-sec}
We obtain a computationally efficient state-space description of a controller satisfying condition (\ref{control-regret-gamma-cond}):
\begin{theorem} \label{pathlength-optimal-controller-thm-ih}
Suppose $(A, B_u)$ is stabilizable and $(A, Q^{1/2})$ is detectable. A causal infinite-horizon controller satisfying (\ref{control-regret-gamma-cond}) exists if and only if there exists a solution to the Ricatti equation
\begin{equation*}
\hat{P} = \hat{L}^*\hat{L} + \hat{A}^*\hat{P}\hat{A} -\hat{A}^* \hat{P} \tilde{B}\tilde{H}^{-1}\tilde{B}^*\hat{P}\hat{A}
\end{equation*}
where $$\hat{L} =  \begin{bmatrix} L & 0 \end{bmatrix}, \hspace{3mm} \hat{A} =  \begin{bmatrix} A & -B_wK_2  \\ 0 & \tilde{A} - \tilde{B}_wK_2  \end{bmatrix} \hspace{3mm} \hat{B}_w = \begin{bmatrix} B_w \Sigma_2^{-1/2} \\ \tilde{B}_w \Sigma_2^{-1/2} \end{bmatrix}, \hspace{3mm} \hat{B}_u = \begin{bmatrix} B_u \\ 0 \end{bmatrix},$$
$$\tilde{B} = \begin{bmatrix} \hat{B}_{u} & \hat{B}_{w} \end{bmatrix}, \hspace{3mm} \tilde{R} = \begin{bmatrix} I & 0 \\ 0 & - I \end{bmatrix}, \hspace{3mm} \tilde{H} =  \tilde{R}  + \tilde{B}^* P \tilde{B}, $$
where $\tilde{A}, \tilde{B}_w$ are defined in (\ref{tilde-A-tilde-B-def}) and $K_2, \Sigma_2$ are defined in (\ref{K2-Sigma2-def}), such that 
\begin{enumerate}
    \item $ \hat{A} - \tilde{B}\tilde{H}^{-1}\tilde{B}^*\hat{P}\hat{A}$ is stable;
    \item $\tilde{R}$ and $\tilde{H}$ have the same inertia;
    \item $\hat{P} \succeq 0$.
\end{enumerate}
In this case, a causal infinite-horizon $H_{\infty}$ controller satisfying (\ref{control-regret-gamma-cond}) is given by
\begin{equation*}
u_t =  -R^{-1/2}\hat{H}^{-1} \hat{B}_{u}^*\hat{P}(\hat{A}\xi_t +  \hat{B}_w  w_t'),
\end{equation*}
where $\hat{H} = I + \hat{B}_u^*\hat{P}\hat{B}_u$ and the dynamics of $\xi$ are
\begin{equation*}
\xi_{t+1} = \hat{A} \xi_t + \hat{B}_{u} u_t + \hat{B}_{w} w_t' 
\end{equation*}
and we initialize $\xi_0 = 0$. The synthetic disturbance $w'$ can be computed using the recursion
$$\nu_{t+1} = \tilde{A}\nu_t + \tilde{B}_w w_t, \hspace{3mm} w_t' = \Sigma_2^{1/2} (K_2\nu_t + w_t),$$ A strictly causal infinite-horizon controller satisfying (\ref{control-regret-gamma-cond}) exists if and only if conditions 1 and 3 hold, and additionally $$\hat{B}_u^*\hat{P}\hat{B}_u \prec \gamma^2 I, \hspace{3mm} I + \hat{B}_w^*\hat{P}(I - \hat{B}_u(-\gamma^2 I + \hat{B}_u^* \hat{P} \hat{B}_u)^{-1}\hat{B}_u^* \hat{P})\hat{B}_w \succ 0.$$
In this case, a strictly causal controller satisfying (\ref{control-regret-gamma-cond}) is given by \begin{equation*}
u_t =  -R^{-1/2}\hat{H}^{-1} \hat{B}_{u}^*\hat{P}\hat{A}\xi_t.
\end{equation*}
\end{theorem}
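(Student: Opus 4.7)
The plan is to reduce the feasibility of (\ref{control-regret-gamma-cond}) to an ordinary $H_\infty$ control problem (Theorem \ref{hinf-ih-controller-thm}) on an augmented synthetic system, in which the pathlength weighting on $w$ has been absorbed into an energy weighting on a synthetic disturbance $w'$.

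First, I will rewrite the regret condition in the $z$-domain. Using the offline cost (\ref{offline-operator-cost}) and $s = Fv + Gw$, condition (\ref{control-regret-gamma-cond}) is equivalent to
\begin{equation*}
\|s\|_2^2 + \|v\|_2^2 < w^* \Phi w, \qquad \Phi := G^*(I + FF^*)^{-1}G + \gamma^2 M^*M,
\end{equation*}
with $M(z) = 1 - z$. I then compute a canonical spectral factorization $\Phi = \Delta^*\Delta$ with $\Delta$ outer (so both $\Delta$ and $\Delta^{-1}$ are causal). Setting $w' := \Delta w$ gives $\|w'\|_2^2 = w^*\Phi w$, and the regret inequality collapses to the $H_\infty$-type condition $\|s\|_2^2 + \|v\|_2^2 < \|w'\|_2^2$. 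Because $\Delta$ and $\Delta^{-1}$ are causal, causal controllers in $w$ are in one-to-one correspondence with causal controllers in $w'$.

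Next, I will realize the causal map $w' \mapsto w$ in state space. A standard Ricatti/Lyapunov-based spectral factorization of $\Phi$ produces matrices $(\tilde A, \tilde B_w, K_2, \Sigma_2)$ defined in (\ref{tilde-A-tilde-B-def})-(\ref{K2-Sigma2-def}) such that the inverse factor admits the realization $\nu_{t+1} = \tilde A \nu_t + \tilde B_w w_t$, $w_t' = \Sigma_2^{1/2}(K_2 \nu_t + w_t)$. Augmenting the physical state with $\nu$, i.e.\ $\xi_t := (x_t, \nu_t)$, collects the original dynamics and this recursion into the single synthetic system $\xi_{t+1} = \hat A \xi_t + \hat B_u u_t + \hat B_w w_t'$ with cost signal $\hat L \xi_t$ and control penalty $\|v_t\|_2^2$, where $\hat A, \hat B_u, \hat B_w, \hat L$ are exactly as in the theorem. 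In this synthetic system, Problem \ref{pathlength-optimal-control-problem} becomes the standard $H_\infty$ control problem (at unit level after rescaling $w'$ by $\gamma$).

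Finally, I will apply Theorem \ref{hinf-ih-controller-thm} to this synthetic system with weighting $\tilde R = \mathrm{diag}(I,-I)$, which directly yields the Ricatti equation for $\hat P$, the stability/inertia/positivity conditions, and the explicit state feedback $u_t = -\hat H^{-1}\hat B_u^*\hat P(\hat A\xi_t + \hat B_w w_t')$ together with its strictly causal analog; undoing the substitution $v_t = R^{1/2}u_t$ gives the stated control laws. The principal technical obstacle is the spectral factorization step: one must verify that $\Phi$ is strictly positive on the unit circle (the $\gamma^2 M^*M$ term vanishes at $z=1$, so positivity there relies on $G^*(I+FF^*)^{-1}G$), that the Ricatti equation defining $(\tilde A, \tilde B_w, K_2, \Sigma_2)$ has an appropriate stabilizing solution, and that the augmented pair $(\hat A, \hat B_u)$ inherits stabilizability and $(\hat A, \hat L)$ inherits detectability from the original assumptions so that Theorem \ref{hinf-ih-controller-thm} can be invoked.
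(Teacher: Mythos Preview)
Your proposal is correct and follows essentially the same route as the paper: rewrite (\ref{control-regret-gamma-cond}) in the $z$-domain as $ALG(w) < w^*\Phi w$ with $\Phi = \gamma^2 M^*M + G^*(I+FF^*)^{-1}G$, obtain a canonical factorization $\Phi = \Delta^*\Delta$, substitute $w' = \Delta w$ to reduce to an $H_\infty$ problem on the augmented system $(\hat A,\hat B_u,\hat B_w,\hat L)$, and invoke Theorem \ref{hinf-ih-controller-thm}. The only refinement in the paper that you leave implicit is that the factorization of $\Phi$ is carried out in two stages---first $I+FF^* = \Delta_1\Delta_1^*$ via Lemma \ref{delta-factorization-lemma}, then the full $\Phi$ via the Riccati equation $\Gamma_2(P_2)=0$---which is exactly what produces the matrices $\tilde A,\tilde B_w,K_2,\Sigma_2$ you cite.
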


\begin{proof}
Taking the $z$-transform of the linear evolution equations $$x_{t+1} = A x_t + B_uR^{-1/2} v_t + B_w w_t, \hspace{3mm} s_t = L x_t$$ we obtain $$zx(z) = Ax(z) + B_u R^{-1/2} v(z) +  B_w w(z), \hspace{3mm} s(z) = L x(z)$$ Letting $F(z)$ and $G(z)$ be the transfer matrices mapping $v(z)$ and $w(z)$ to $s(z)$, respectively, we see that $$F(z) = L(zI - A)^{-1}B_u R^{-1/2}, \hspace{3mm} G(z) = L(zI - A)^{-1}B_w.$$ 
Using the offline optimal cost (\ref{offline-operator-cost}) and the fact that $\textsc{pathlength}(w) = \|Mw(z)\|_2^2$, where $M(z) = 1-z$, we see that condition (\ref{control-regret-gamma-cond}) is equivalent to  $$ALG(w) < w^*\left[\gamma^2 M^* M + G^*\left(I + FF^*\right)^{-1}G \right]w,$$ where we have suppressed the dependence on $z$ to improve legibility.  Our goal is to obtain a canonical factorization 
\begin{equation} \label{control-factorization}
\gamma^2 M^* M + G^*\left(I + FF^*\right)^{-1}G = \Delta^* \Delta(z).
\end{equation}

With this factorization, condition (\ref{control-regret-gamma-cond}) becomes the $H_{\infty}$ condition $$ ALG(w) < \|w'\|_2^2,$$ where the synthetic disturbance $w'$ is $w'(z) = \Delta(z)w(z)$ and the system dynamics in the frequency domain are 
\begin{equation} \label{freq-domain-dynamics-delta}
s(z) = F(z)v(z) + G(z)\Delta^{-1}(z)w'(z).
\end{equation}
This establishes that the pathlength-optimal controller is the $H_{\infty}$ optimal controller in the system (\ref{freq-domain-dynamics-delta}).

We first factor $I + F(z) F(z^{-*})^*$ as $\Delta_1(z) \Delta_1(z^{-*})^*$. Applying Lemma 1, we see that
\begin{equation} \label{delta-definition}
\Delta_1(z) = (I + L(zI - A)^{-1}K_1 )\Sigma_1^{1/2}.
\end{equation}
We have $$ \Delta_1^{-1}(z) = \Sigma_1^{-1/2} \left(I - L(zI - (A - K_1L))^{-1}K_1 \right), $$ $$G(z) = L(zI - A)^{-1}B_w,$$ therefore a minimal representation of $\Delta_1^{-1}(z)G(z)$ is $$\Delta_1^{-1}(z)G(z) = \Sigma_1^{-1/2} L(zI - (A - K_1L))^{-1}B_w.$$

We can now recover the factorization (\ref{control-factorization}).
Notice that the left-hand side of (\ref{control-factorization}) can be written as $$\begin{bmatrix} \tilde{B}_w^*(z^{-*}I - \tilde{A})^{-*} & I  \end{bmatrix} \begin{bmatrix} \tilde{L}^*\tilde{L} & \tilde{S} \\ \tilde{S}^* & \gamma^2 I \end{bmatrix} \begin{bmatrix} (zI - \tilde{A})^{-1}\tilde{B}_w \\ I \end{bmatrix},$$
 where we define 
\begin{equation} \label{tilde-A-tilde-B-def}
 \tilde{L} = \begin{bmatrix} \Sigma_1^{-1/2}L & 0 \\ 0 & \gamma I \end{bmatrix}, \hspace{3mm} \tilde{S} = \begin{bmatrix} 0 \\ \gamma^2 I \end{bmatrix}, \hspace{3mm} \tilde{A} = \begin{bmatrix} A - K_1L & 0 \\ 0 & 0 \end{bmatrix}, \hspace{3mm} \tilde{B}_w = \begin{bmatrix} B_w \\  -I \end{bmatrix}.
\end{equation}
Applying Lemma 5, we see that this equals $$\begin{bmatrix} \tilde{B}_w^*(z^{-*}I - \tilde{A})^{-*} & I \end{bmatrix} \Lambda_2(P_2) \begin{bmatrix} (zI - \tilde{A})^{-1}\tilde{B}_w \\ I \end{bmatrix},$$
where $P_2$ is an arbitrary Hermitian operator and we define $$\Lambda_2(P_2) = \begin{bmatrix} \tilde{L}^*\tilde{L} - P_2 +  \tilde{A}^*P_2\tilde{A} & \tilde{S} + \tilde{A}^*P_2\tilde{B}_w \\ \tilde{S}^* + \tilde{B}_w^*P_2\tilde{A} & \gamma^2I + \tilde{B}_w^*P_2\tilde{B}_w \end{bmatrix}.$$ Notice that the $\Lambda_2(P_2)$ can be factored as $$\begin{bmatrix} I & K_2^*(P_2) \\ 0 & I \end{bmatrix} \begin{bmatrix} \Gamma_2(P_2) & 0 \\ 0 & \Sigma_2(P_2) \end{bmatrix} \begin{bmatrix} I & 0 \\ K_2(P_2) & I \end{bmatrix}, $$
where we define $$\Gamma_2(P_2) = \tilde{L}^*\tilde{L} - P_2 +  \tilde{A}^*P_2\tilde{A} - K_2^*(P_2)\Sigma_2K_2(P_2),$$ 
\begin{equation} \label{K2-Sigma2-def}
K_2(P_2) = \Sigma_2^{-1}(P_2)(\tilde{S}^* + \tilde{B}_w^*P_2\tilde{A}), \hspace{3mm} \Sigma_2(P_2) = \gamma^2I + \tilde{B}_w^*P_2\tilde{B}_w.
\end{equation}
It is clear that $(\tilde{A}, \tilde{B}_w)$ is stabilizable, therefore the Riccati equation $\Gamma_2(P_2) = 0$ has a unique stabilizing solution (see, e.g. Theorem E.6.2 in \cite{kailath2000linear}).  Suppose $P_2$ is chosen to be this solution, and define $K_2 = K_2(P_2)$, $\Sigma_2 = \Sigma_2(P_2)$. We immediately obtain the factorization (\ref{control-factorization}), where we define 
\begin{equation} \label{control-delta}
\Delta(z) = \Sigma_2^{1/2}(I + K_2(zI - \tilde{A})^{-1}\tilde{B}_w).
\end{equation}
Recall that the pathlength-optimal controller is the $H_{\infty}$-optimal controller in the system (\ref{freq-domain-dynamics-delta}), driven by the synthetic disturbance $w'(z) = \Delta(z) w(z)$. We have $$\Delta^{-1}(z) = (I - K_2(zI - (\tilde{A} - \tilde{B}_wK_2))^{-1}\tilde{B}_w) \Sigma_2^{-1/2}. $$ We note that $ \tilde{A} - \tilde{B}_wK_2$ is stable and hence $\Delta^{-1}(z)$ is causal and bounded since its poles are strictly contained in the unit circle. Define $$\hat{L} =  \begin{bmatrix} L & 0 \end{bmatrix}, \hspace{3mm} \hat{A} =  \begin{bmatrix} A & -B_wK_2  \\ 0 & \tilde{A} - \tilde{B}_wK_2  \end{bmatrix} \hspace{3mm} \hat{B}_w = \begin{bmatrix} B_w \Sigma_2^{-1/2} \\ \tilde{B}_w \Sigma_2^{-1/2} \end{bmatrix}, \hspace{3mm} \hat{B}_u = \begin{bmatrix} B_u \\ 0 \end{bmatrix}.$$
It is easy to verify that $$F(z) = \hat{L}(zI - \hat{A})^{-1}\hat{B}_u, \hspace{3mm} G(z)\Delta_1^{-1}(z) = \hat{L}(zI - \hat{A})^{-1} \hat{B}_w.$$ We have shown that the pathlength-optimal controller in the linear dynamical system parameterized by $\{A, B_uR^{-1/2}, B_w, L\}$ is the $H_{\infty}$-optimal controller in the system parameterized by $\{\hat{A}, \hat{B}_u, \hat{B}_w, \hat{L} \}$. Plugging these parameters into Theorem \ref{hinf-ih-controller-thm} immediately yields the pathlength-optimal control action $v_t$; we recover $u_t$ by setting $u_t = R^{-1/2}v_t$. A state-space model for the synthetic disturbance $w'(z) = \Delta(z)w(z)$ is easily obtained from (\ref{control-delta}).
\end{proof}

\section{Pathlength-optimal filtering} \label{filtering-sec}

We obtain a computationally efficient state-space description of a filter satisfying condition (\ref{filter-regret-gamma-cond}):

\begin{theorem}
A filter satisfying (\ref{filter-regret-gamma-cond}) exists if and only if $$\bar{\sigma}(Z\Pi) \leq 1,$$
where $Z$ and $\Pi$ are solutions of the Lyapunov equations $$Z = F^*ZF + H^*H, \hspace{3mm} \Pi = F \Pi F^* + GG^* $$
and $F, G, H$ are defined in (\ref{fgh-def}). In this case, a filtered estimator satisfying (\ref{filter-regret-gamma-cond}) is given by $$\hat{s}_t = \Sigma_3^{-1/2}\beta_t - K_3 \pi_t, $$ where $\beta, \pi$ have the dynamics $$\pi_{t+1} = (A_1 - A_1W_1L^*K_3)\pi_t + A_1W_1L^*\Sigma_3^{-1/2}\beta_t, \hspace{3mm}  \beta_t = \Delta_3(1)Q(1)z_t + \alpha_t - \alpha_{t-1}.$$ The variable $\alpha$ is given by $$\alpha_t = H(\Pi\xi^1_{t+1} + \xi^2_t),$$ where $\xi^1, \xi^2$ have dynamics $$\xi^1_{t+1} = F_{\gamma}\xi^1_t + K_{\gamma}z_t, \hspace{3mm} \xi^2_{t+1} = A_2^*\xi^2_t + C^*\Sigma_2^{-1/2}z_t$$ where $$z_t = \Sigma_2^{-1/2}y_t - Ce_t, \hspace{3mm} e_{t+1} = A_2e_t + K_2y_t.$$
The matrices $\Delta_3(1)Q(1), A_1, A_2, \Sigma_2, \Sigma_3, K_2, K_3$ and $W_1$ are defined as in Lemmas 1, 2, and 3, and $F_{\gamma}, K_{\gamma}$ are defined in (\ref{fgamma-kgamma-def}).
\end{theorem}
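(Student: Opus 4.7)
The plan is to reduce Problem \ref{pathlength-estimation-problem} at level $\gamma$ to a Nehari problem and then invoke the explicit state-space solution of Theorem \ref{nehari-solution-theorem}. The strategy parallels that of Theorem \ref{pathlength-optimal-controller-thm-ih} in the control setting, but with the inner $H_\infty$ subproblem replaced by a Nehari subproblem, because the filter $K$ enters the regret linearly through $T_K$ rather than through a feedback loop. As a first step, I would rewrite (\ref{filter-regret-gamma-cond}) as the operator inequality
\begin{equation*}
T_K^* T_K - T_{K_0}^* T_{K_0} \preceq \gamma^2 \begin{bmatrix} I & 0 \\ 0 & M^* M \end{bmatrix},
\end{equation*}
using $\textsc{pathlength}(v) = \|Mv\|_2^2$ with $M(z) = 1-z$. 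Substituting $T_K = T_{K_0} - (K - K_0)\begin{bmatrix} H & I \end{bmatrix}$ and invoking the optimality identity $T_{K_0}\begin{bmatrix} H^* \\ I \end{bmatrix} = 0$ (which one checks directly from (\ref{optimal-smoothed-estimate})) reduces the left-hand side to a quadratic form in $\Delta := K - K_0$, setting up the subsequent spectral factorization.

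Next, I would perform three successive spectral factorizations supplied by Lemmas 1, 2, and 3. Lemma 1 factors the observation spectral density $I + HH^*$, yielding the whitening / innovations representation with state $e_t$ satisfying $e_{t+1} = A_2 e_t + K_2 y_t$ and white innovation $z_t = \Sigma_2^{-1/2} y_t - C e_t$. Lemma 2 absorbs the pathlength weighting $M^* M$ on the $v$-channel, producing $A_1, W_1, K_3, \Sigma_3$ and the transfer $\Delta_3$ relating the original and synthetic noises. Lemma 3 combines these into a reduced form in which the regret condition becomes the Nehari inequality
\begin{equation*}
\|\tilde{K}(z) - T(z)\|_\infty < \gamma,
\end{equation*}
where $\tilde K$ is a causal reparameterization of the filter $K$ and $T(z) = H(zI - F)^{-1}G$ is a strictly anticausal transfer matrix with $F, G, H$ given by (\ref{fgh-def}). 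The auxiliary states $\xi^1_t$ and $\xi^2_t$ in the theorem statement are the internal states of the state-space realizations of, respectively, the Nehari causal approximant and the Wiener (noncausal) component.

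I would then apply Theorem \ref{nehari-solution-theorem} to the reduced problem. The minimum feasible level is $\bar\sigma(Z\Pi)$; with the $\gamma$-normalization absorbed into $F, G, H$ through (\ref{fgh-def}), this yields the feasibility criterion $\bar\sigma(Z\Pi) \leq 1$. Plugging the explicit $\gamma$-optimal Nehari solution back through the substitutions of the previous step recovers the formula $\hat s_t = \Sigma_3^{-1/2} \beta_t - K_3 \pi_t$ together with the stated dynamics for $\pi_t$ and for $\alpha_t = H(\Pi \xi^1_{t+1} + \xi^2_t)$. The first-difference structure $\beta_t = \Delta_3(1) Q(1) z_t + \alpha_t - \alpha_{t-1}$ is the time-domain signature of multiplication by $M(z) = 1-z$ and encodes the pathlength weighting on $v$.

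The main obstacle is executing the three factorizations so that $M$ is properly absorbed while every transfer matrix remains causal and bounded on the unit circle. A naive attempt to whiten by dividing through by $M$ fails, because $M^{-1}(z) = 1/(1-z)$ has a pole at $z = 1$ on the unit circle and is not in $H_\infty$. Avoiding this obstruction requires augmenting the original state-space model and ordering the factorizations so that the associated Ricatti and Lyapunov equations admit stabilizing solutions; those choices are precisely what produce the matrices $A_1, A_2, K_2, K_3, W_1, \Sigma_2, \Sigma_3$ appearing in the theorem statement. Once the reduction is in place, the remainder of the proof is bookkeeping.
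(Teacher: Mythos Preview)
Your overall plan---reduce Problem~\ref{pathlength-estimation-problem} to a Nehari problem via successive spectral factorizations and then invoke Theorem~\ref{nehari-solution-theorem}---matches the paper's. However, two of your intermediate claims misfire and leave genuine gaps.

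First, the optimality identity $T_{K_0}\begin{bmatrix}H^*\\ I\end{bmatrix}=0$ does \emph{not} reduce $T_K^*T_K - T_{K_0}^*T_{K_0}$ to a pure quadratic in $\Delta=K-K_0$. Writing $T_K = T_{K_0} - \Delta\begin{bmatrix}H & I\end{bmatrix}$ and expanding leaves cross terms $-T_{K_0}^*\Delta\begin{bmatrix}H & I\end{bmatrix} - \begin{bmatrix}H^*\\ I\end{bmatrix}\Delta^*T_{K_0}$ that do not vanish. The paper instead conjugates by a unitary $\theta$ (built from the $\Delta_1,\Delta_2$ of Lemma~1) so that $T_{K_0}\theta=\begin{bmatrix}J\Delta_1^{-1} & 0\end{bmatrix}$; the cross terms then sit in the off-diagonal blocks of the $2\times 2$ matrix $\gamma^2 M^*M - \theta^*(T_K^*T_K - T_{K_0}^*T_{K_0})\theta$, and a Schur complement eliminates them, producing the factor $\gamma^{-2}I + \gamma^{-4}RR^*$ with $R=J\Delta_1^{-1}$. \emph{This} is what Lemma~2 factors as $\Delta_3^*\Delta_3$. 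So your statement that ``Lemma~2 absorbs the pathlength weighting $M^*M$ on the $v$-channel'' mislocates its role: $\Delta_3$ absorbs the Schur-complement correction coming from the $w$-channel, not the pathlength weight on $v$.

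Second---and this is the obstacle you correctly flag but do not actually resolve---the pathlength weight $M_v^*M_v$ with $M_v(z)=1-z^{-1}$ is handled by a separate mechanism your proposal omits. After the Schur complement and the $\Delta_3$ factorization one arrives at $(\Delta_3 K\Delta_2 - \Delta_3 Q)^*(\Delta_3 K\Delta_2 - \Delta_3 Q) \preceq M_v^*M_v$. Since $M_v(1)=0$, feasibility forces $\Delta_3(1)K(1)\Delta_2(1)=\Delta_3(1)Q(1)$; imposing this lets one write $\Delta_3K\Delta_2-\Delta_3Q=(1-z^{-1})(\tilde K-\tilde Q)$ and cancel $M_v^*M_v$ from both sides, yielding the Nehari condition $\|\hat K - T\|_\infty < 1$ (at level~$1$, not~$\gamma$, since $\gamma$ has already been absorbed into $\Delta_3$). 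This ``match at $z=1$, then factor out $(1-z^{-1})$'' step---not a state-space augmentation or a reordering of Riccati solves---is exactly what produces the constant $\Delta_3(1)Q(1)z_t$ and the first difference $\alpha_t-\alpha_{t-1}$ in the final formula, and it is the substantive idea your outline is missing.
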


\begin{proof}
Our goal is to find a filter $K$ satisfying (\ref{filter-regret-gamma-cond}). Moving to the  $z$-domain, this condition can be neatly expressed in terms of transfer matrices as
\begin{equation} \label{regret-cond-transfer-operator}
0 \prec \gamma^2 M^*M + T_K^*T_K - T_{K_0}^*T_K,
\end{equation}
where we have suppressed the dependence on $z$ to improve legibility and defined $$M(z) = \begin{bmatrix} I_m & 0 \\ 0 &  M_v(z)\end{bmatrix} $$ and $M_v(z) = (1-z^{-1})I_p$. 
Define the unitary operator $$\theta = \begin{bmatrix} I & H^* \\ -H & I \end{bmatrix} \begin{bmatrix} \Delta_1^{-1} & 0 \\ 0 & -\Delta_2^{-*} \end{bmatrix},$$
where we define causal operators $\Delta_1, \Delta_2$ such that $$ \Delta_1^*\Delta_1 = I + H^*H, \hspace{3mm} \Delta_2 \Delta_2^* = I + HH^*. $$
Notice that for every estimator $K$, we have $$\mathcal{T}_{K}\theta =  \begin{bmatrix} J\Delta_1^{-1} & K\Delta_2 - JH^*\Delta_2^{-*}  \end{bmatrix},$$ and in particular, we have $$\mathcal{T}_{K_0}\theta =  \begin{bmatrix} J\Delta_1^{-1} & 0 \end{bmatrix}.$$
Since $\theta$ is unitary and commutes with $M(z)$, condition (\ref{regret-cond-transfer-operator}) is equivalent to 
\begin{equation} \label{regret-cond-transfer-operator-2}
0 \prec \gamma^2 M^*M + \theta^*\mathcal{T}_{K_0}^* \mathcal{T}_{K_0}\theta - \theta^*\mathcal{T}_{K}^* \mathcal{T}_{K}\theta.
\end{equation}
Define $$ R(z) = J(z)\Delta_1^{-1}(z), \hspace{3mm} Q(z) = J(z)H^*(z^{-*}))\Delta_2^{-*}(z^{-*}).$$
The right-hand side of (\ref{regret-cond-transfer-operator-2}) can be rewritten as 
\begin{equation*} 
\begin{bmatrix} \gamma^2 I_m & -R^* (K\Delta_2 - Q) \\ -(K\Delta_2 - Q)^*R & \gamma^2 M_v^*M_v - (K\Delta_2 - Q)^*(K\Delta_2 - Q)  \end{bmatrix}
\end{equation*}
where we suppress the dependence on $z$ to improve legibility.
Applying the Schur complement, we see that condition (\ref{regret-cond-transfer-operator}) is equivalent to
$$(K\Delta_2 - Q)^*(\gamma^{-2}I_m + \gamma^{-4}RR^*) (K\Delta_2 - Q) \preceq  M_v^*M_v.$$
Suppose $\gamma^{-2}I_m + \gamma^{-4}RR^*$ can be factored as $\Delta_3^*(z^{-*})\Delta_3(z)$. where $\Delta_3(z)$ is causal. Then this condition is equivalent to \begin{equation} \label{rhs-regret-cond-transfer-operator-3}
(\Delta_3K\Delta_2 - \Delta_3 Q)^*(\Delta_3K\Delta_2 - \Delta_3Q) \preceq M_v^*M_v. 
\end{equation}
In order for condition (\ref{filter-regret-gamma-cond}) to hold, we must pick $K$ such that $\Delta_3(1)K(1)\Delta_2(1) = \Delta_3(1)Q(1)$; otherwise the regret incurred when $w = 0$ and $v$ is constant would be nonzero even though the energy in $w$ and the pathlength of $v$ are both zero. Notice that   $$\Delta_3(z)K(z)\Delta_2(z) - \Delta_3(z)Q(z) = (1 - z^{-1})(\tilde{K}(z)  - \tilde{Q}(z)), $$ where $\tilde{K}(z),  \tilde{Q}(z)$ satisfy $$\Delta_3(z)K(z)\Delta_2(z) = \Delta_3(1)K(1)\Delta_2(1)  + (1 - z^{-1})\tilde{K}(z)$$ $$ \Delta_3(z)Q(z) = \Delta_3(1)Q(1) + (1 - z^{-1})\tilde{Q}(z).$$ 
It is easy to see that $K$ is causal if and only if $\tilde{K}$ is causal. Therefore condition (\ref{rhs-regret-cond-transfer-operator-3}) is equivalent to \begin{equation} \label{rhs-regret-cond-transfer-operator-4}
(\tilde{K}(z)  - \tilde{Q}(z))^* (\tilde{K}(z)  - \tilde{Q}(z)) \preceq I.
\end{equation}
By Lemma 3, we can decompose $\Delta_3(z)Q(z)$ as 
$$\hat{L}W_2C^*\Sigma_2^{-1/2} + \hat{L}(zI - \hat{A})^{-1}\hat{A}W_2
C^*\Sigma_2^{-1/2} + \hat{L}W_2A_2^*(z^{-1}I - A_2^*)^{-1} C^*\Sigma_2^{-1/2},$$
where $\hat{L}, \hat{A}, A_2$ and $W_2$ are defined in the lemma and $A_2$ is stable. Therefore $$\Delta_3(z)Q(z) - \Delta_3(1)Q(1) = (1 - z^{-1}) \tilde{Q}(z),$$ where $$\tilde{Q}(z) = S(z) + T(z),$$
and $$S(z) = -z\hat{L}(I-\hat{A})^{-1}(zI - \hat{A})^{-1}\hat{A}W_2
C^*\Sigma_2^{-1/2}$$ is causal and $$T(z) = \hat{L}W_2A_2^*(I-A_2^*)^{-1}(z^{-1}I - A_2^*)^{-1}C^*\Sigma_2^{-1/2} $$ is strictly anticausal. Define $$\hat{K}(z) = \tilde{K}(z) - S(z).$$ It is clear that $\hat{K}(z)$ is causal if and only if $\tilde{K}(z)$ is causal, and furthermore, condition (\ref{rhs-regret-cond-transfer-operator-4}) is equivalent to 
\begin{equation} \label{rhs-regret-cond-transfer-operator-5}
 \|\hat{K}(z)  - T(z)\|_{\infty} < 1.
\end{equation}
We recognize (\ref{rhs-regret-cond-transfer-operator-5}) as a Nehari problem.
Define 
\begin{equation} \label{fgh-def}
H = \hat{L}W_2A_2^*(I-A_2^*)^{-1}, \hspace{2mm} F =  A_2^*, \hspace{2mm} G = C^*\Sigma_2^{-1/2}. 
\end{equation}
Then $$T(z) = H(z^{-1}I - F)^{-1}G,$$ where $F$ is stable. 
Appealing to Theorem 2 in the main text, we see that the smallest possible value of $\gamma$ such that a solution to (\ref{rhs-regret-cond-transfer-operator-5}) exists is the smallest value of $\gamma$ such that $ \bar{\sigma}(Z\Pi) \leq 1$,
where $Z$ and $\Pi$ are solutions of the Lyapunov equations $$Z = F^*ZF + H^*H, \hspace{3mm} \Pi = F \Pi F^* + GG^*. $$ Let $\gamma^{\star}$ be this optimal value of $\gamma$. For all $\gamma \geq \gamma^*$, a corresponding $\gamma$-optimal solution $\hat{K}(z)$ to (\ref{rhs-regret-cond-transfer-operator-5})  is $$\hat{K}(z) = H\Pi(I + F_{\gamma}(zI - F_{\gamma})^{-1})K_{\gamma}, $$
where 
\begin{equation} \label{fgamma-kgamma-def}
F_{\gamma} = F^* - K_{\gamma}G^*, \hspace{3mm} K_{\gamma} = (I - F^* Z F \Pi)^{-1} F^* Z G.
\end{equation}
We hence obtain the following expression for $K(z)$ in terms of $\hat{K}(z)$: $$ \Delta_3^{-1}(z)\left(\Delta_3(1)Q(1) + (1-z^{-1})(\hat{K}(z) + T(z) )\right)\Delta_2^{-1}(z). $$ 
This describes a filter satisfying condition (\ref{filter-regret-gamma-cond}) in the $z$-domain; translating this transfer matrix back into time domain, we easily obtain the state-space model described in the theorem. The constant $\Delta_3(1)Q(1)$ is easily found from the definitions of $\Delta_3(z)$ and $Q(z)$.

\end{proof}

\section{Experiments} \label{experiments-sec}
\subsection{Pathlength-optimal control}
We benchmark our pathlength-optimal controller in a nonlinear inverted pendulum system. This dynamical system has two scalar states, $\theta$ and $\dot{\theta}$, representing angular position and angular velocity, respectively, and a single scalar control input $u$. The state $(\theta, \dot{\theta})$ evolves according to the nonlinear evolution equation 
$$\frac{d}{dt} \begin{bmatrix} \theta \\ \dot{\theta} \end{bmatrix} = \begin{bmatrix} \dot{\theta} \\ \frac{mg\ell}{J}\sin{\theta} + \frac{\ell}{J}u\cos{\theta} + \frac{\ell}{J}w\cos{\theta} \end{bmatrix},$$ 
where $w$ is an external scalar disturbance, and $m, \ell, g, J$ are physical parameters describing the system; we assume that units are scaled so that these parameters are 1. Although these dynamics are nonlinear, we can benchmark the regret-optimal controller against the $H_2$-optimal, $H_{\infty}$-optimal, and clairvoyant offline optimal controllers using Model Predictive Control (MPC). In the MPC framework, we iteratively linearize the model dynamics around the current state, compute the optimal control action in the linearized system, and then update the state in the original nonlinear system using this control action. In all of our experiments we take $Q, R = I$ and initialize $\theta$ and $\dot{\theta}$ to zero. We set the discretization parameter $\delta_t = 0.001$ and sample the dynamics at intervals of $\delta_t$.

In Figure \ref{control-plots}, we plot the relative performance of the pathlength-optimal, $H_2$-optimal, and offline optimal controllers across different input disturbances. In several of our experiments the cost incurred by the $H_{\infty}$-optimal controller is orders of magnitude higher than that of the other controllers and is not shown in the corresponding figure. We first take $w$ to be sampled i.i.d from a standard Gaussian in Figure \ref{invpend-gaussian-fig}; surprisingly, the pathlength-optimal controller significantly outperforms the $H_2$-optimal controller, which is tuned for i.i.d zero-mean noise. This may be because the pathlength-optimal controller is better able to adapt to nonlinear dynamics (though it is only designed to adapt to the disturbance-generating process, not the dynamics).  In Figure \ref{invpend-step-fig}, the disturbance is a step-function: it is $+1$ for 500 timesteps and $-1$ for the next 500 timesteps. While this disturbance has high energy, its pathlength is only 4. As expected, the pathlength-optimal controller outperforms the other causal controllers. In Figures \ref{invpend-ones-fig}-\ref{invpend-sine-3-fig}, we consider sinusoidal disturbances with amplitude 1 and periods $\infty$ (i.e. a constant disturbance), $200\pi, 20\pi$ and $2\pi$, respectively. Since the sine function is $1$-Lipschitz, these disturbance are respectively 0.0-, 0.01-. 0.1-, and 1-temporally Lipschitz. In all of these experiments, the pathlength-optimal controller outperforms the $H_2$-optimal and $H_{\infty}$-optimal controllers and closely tracks the clairvoyant offline optimal controller.

\begin{figure}[htb]
\centering
\begin{subfigure}{0.45\textwidth}
\includegraphics[width=\columnwidth]{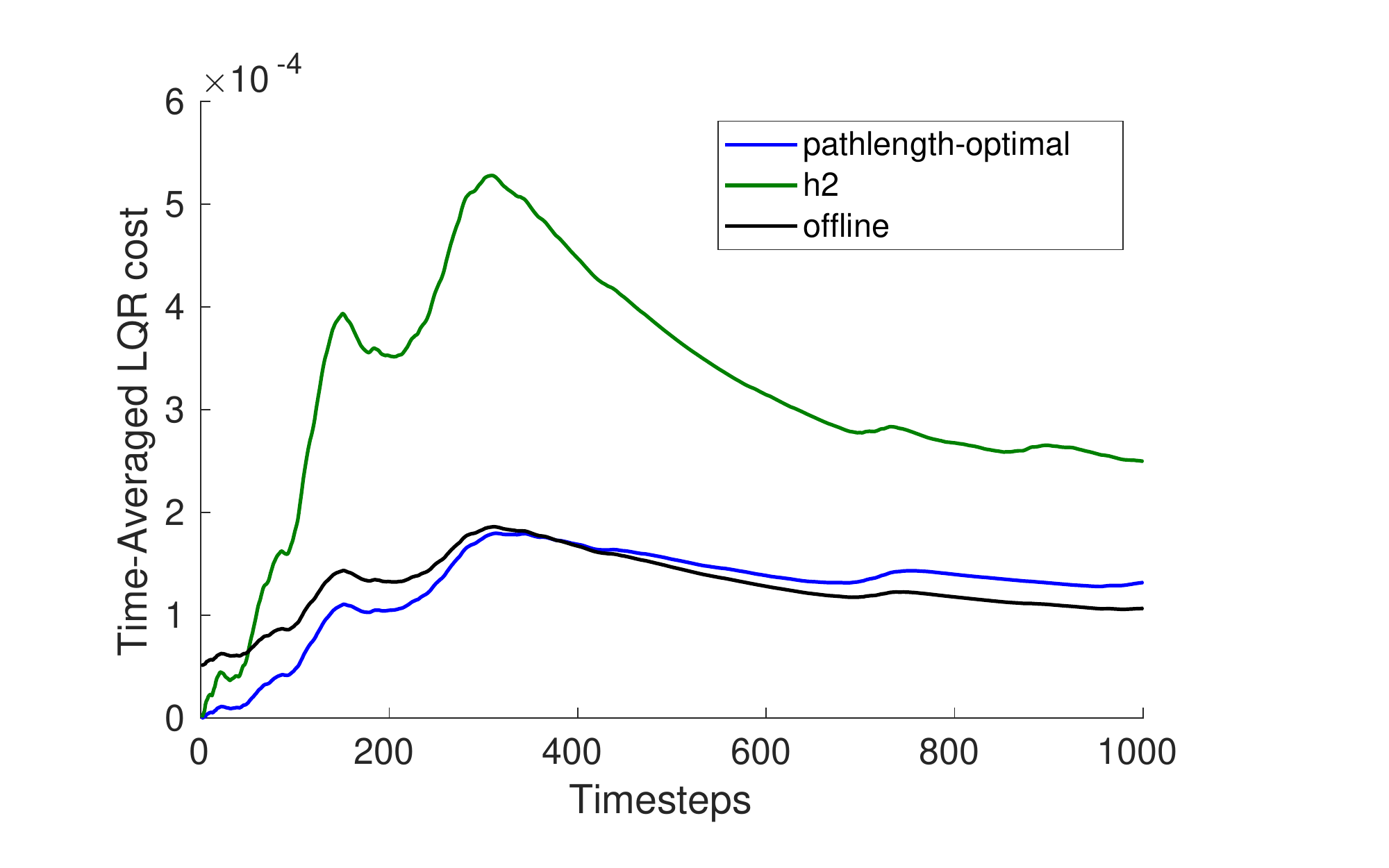}
\caption{The driving disturbance is zero-mean Gaussian noise. }
\label{invpend-gaussian-fig}
\end{subfigure} \hfil
\begin{subfigure}{0.45\textwidth}
\includegraphics[width=\columnwidth]{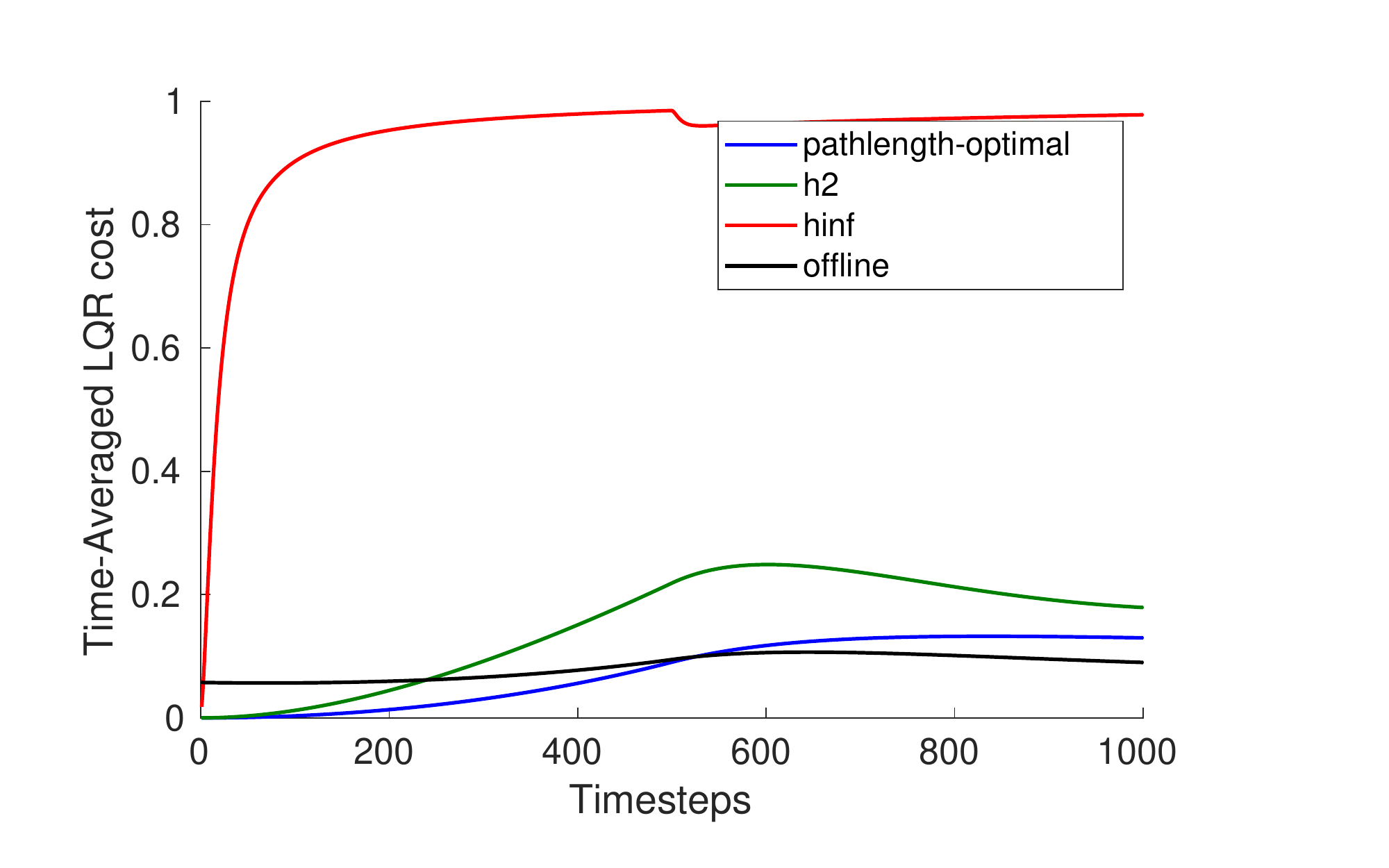}
\caption{The driving disturbance is a step-function. }
\label{invpend-step-fig}
\end{subfigure}
\begin{subfigure}{0.45\textwidth}
\includegraphics[width=\columnwidth]{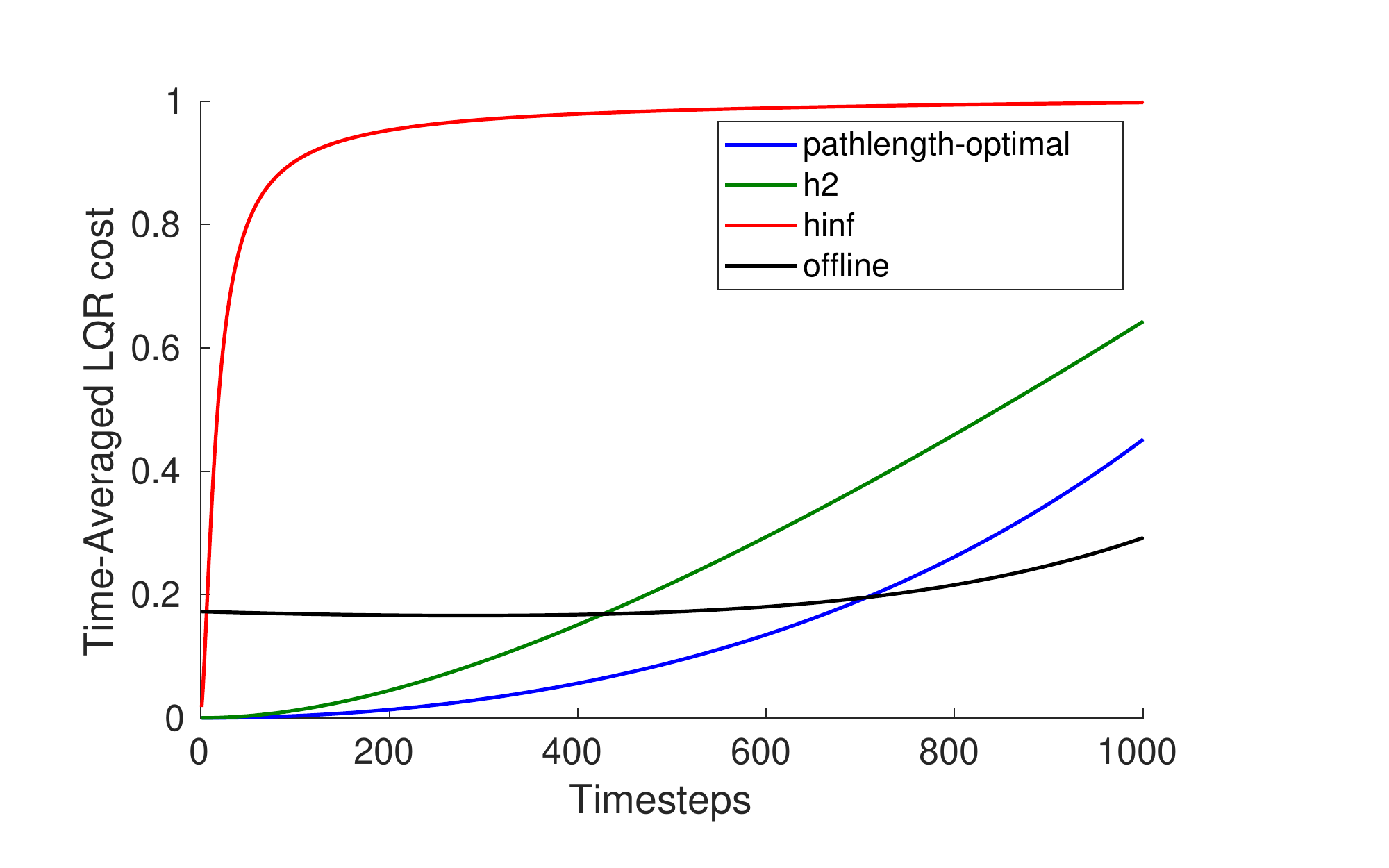}
\caption{The driving disturbance is constant.}
\label{invpend-ones-fig}
\end{subfigure} \hfil
\begin{subfigure}{0.45\textwidth}
\includegraphics[width=\columnwidth]{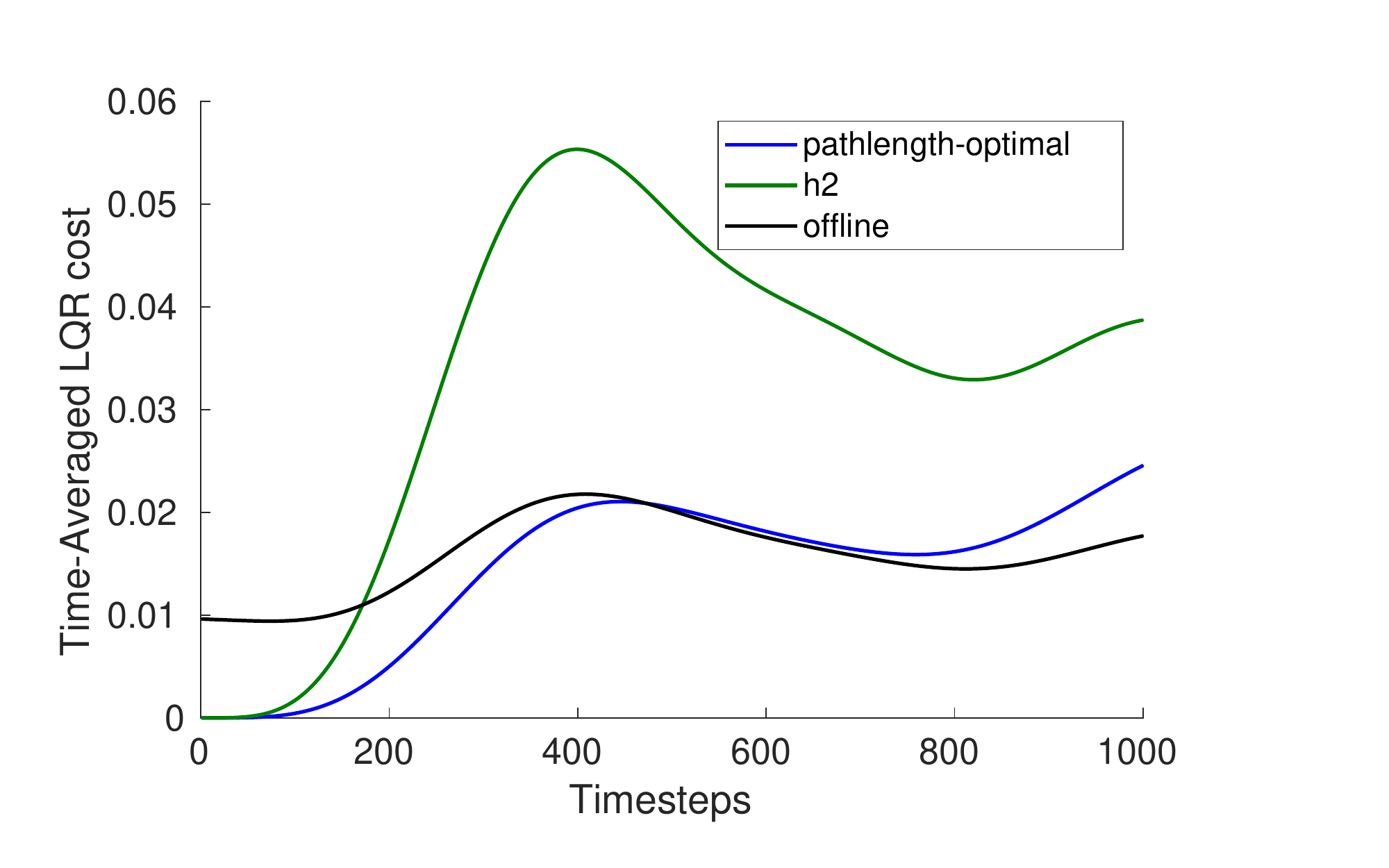}
\caption{The driving disturbance is sinusoidal with period $200\pi$. }
\label{invpend-sine-2-fig}
\end{subfigure}
\begin{subfigure} {0.45\textwidth}
\includegraphics[width=\columnwidth]{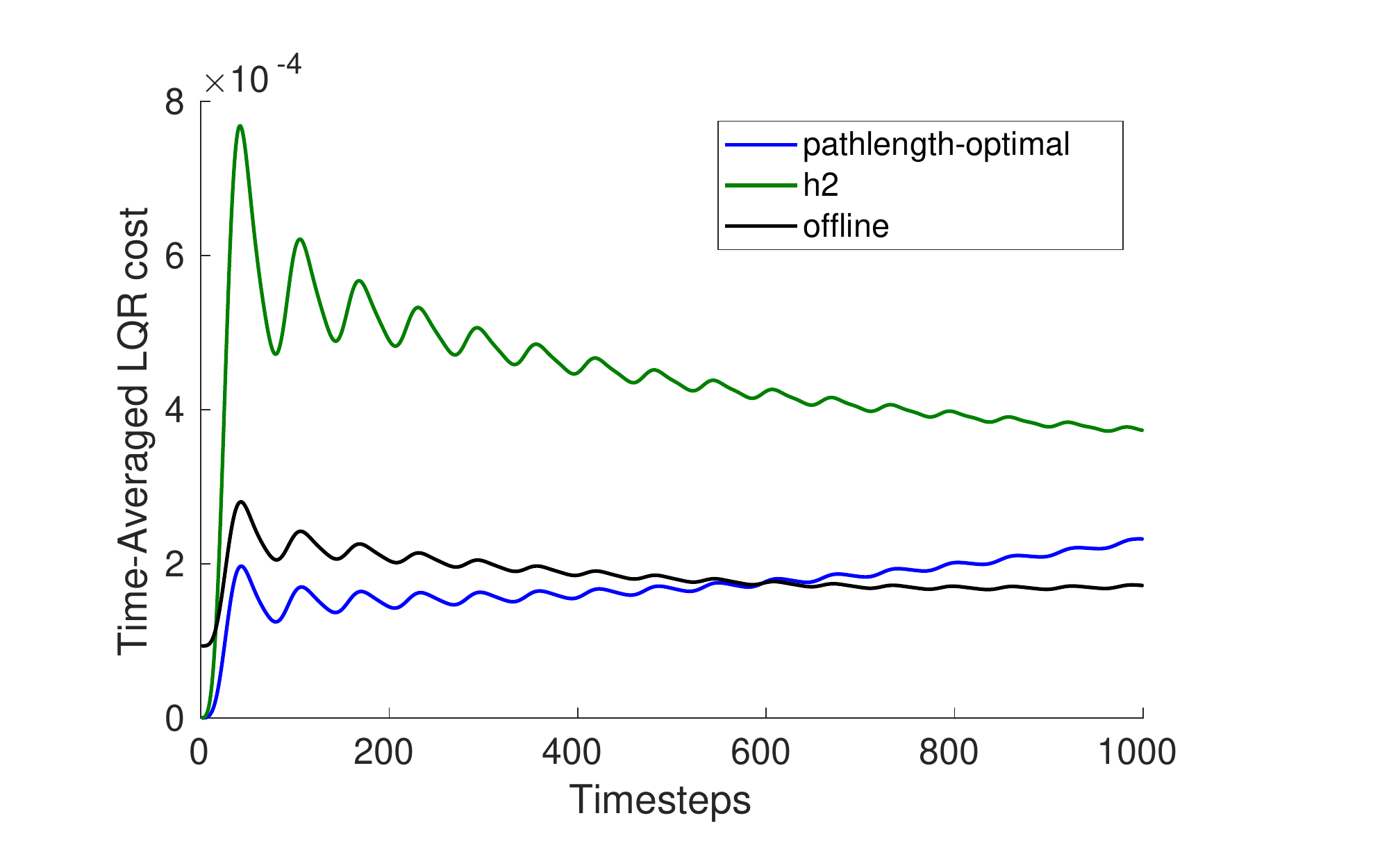}
\caption{The driving disturbance is sinusoidal with period $20\pi$. }
\label{invpend-sine-fig-1}
\end{subfigure} \hfil
\begin{subfigure}{0.45\textwidth}
\includegraphics[width=\columnwidth]{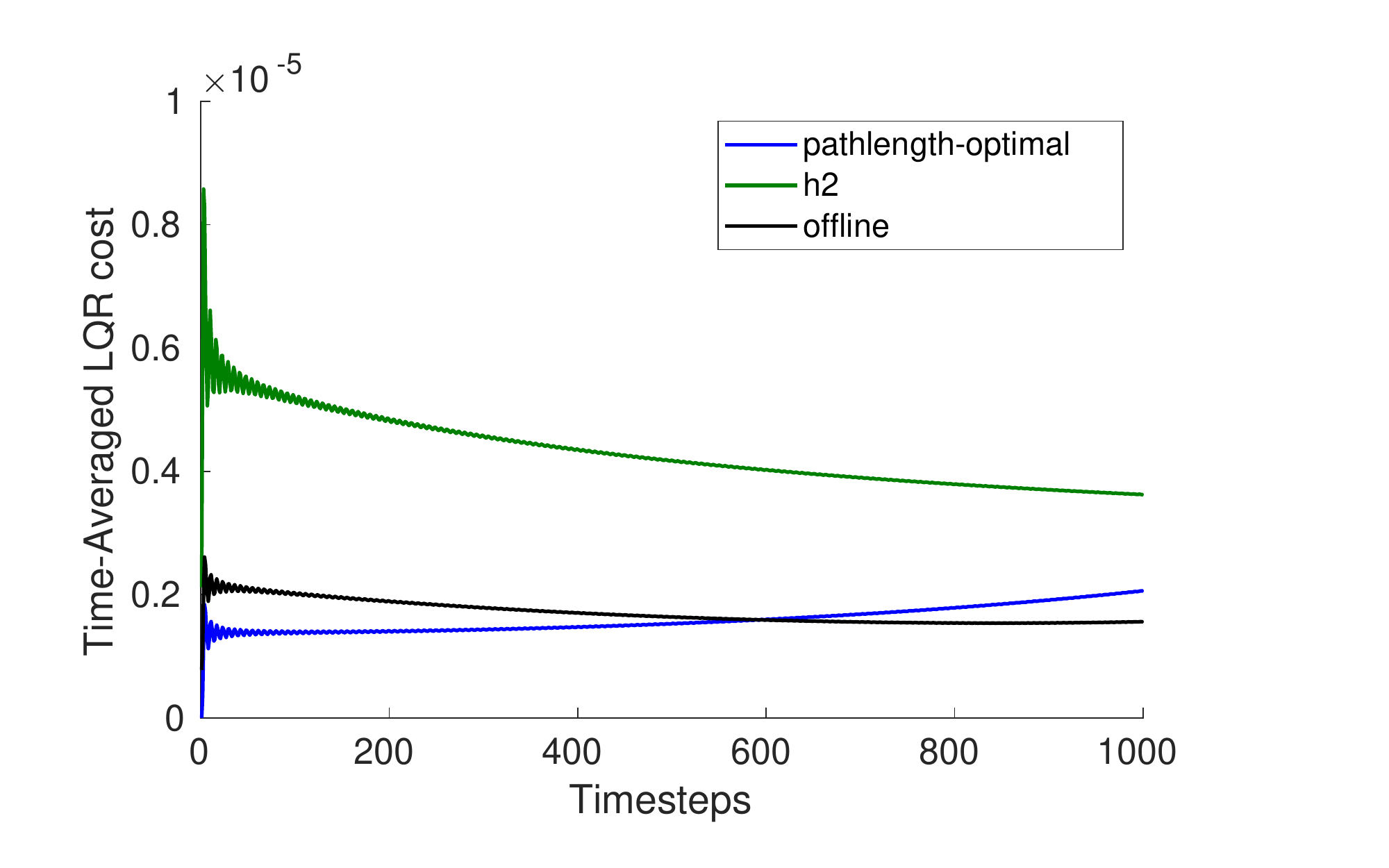}
\caption{The driving disturbance is sinusoidal with period $2\pi$.}
\label{invpend-sine-3-fig}
\end{subfigure}
\caption{Relative performance of linear-quadratic (LQ) controllers.}
\label{control-plots}
\end{figure}

\subsection{Pathlength-optimal filtering}
We consider a one-dimensional tracking problem, where the goal is to estimate an object's position given noisy observations of its trajectory. We consider the state-space model $$\begin{bmatrix} x_{t+1} \\ \nu_{t+1} \end{bmatrix} = \begin{bmatrix} 1 & \delta_t \\ 0 & 1 \end{bmatrix} \begin{bmatrix} x_t \\ \nu_t \end{bmatrix} +  \begin{bmatrix} 0 \\ \delta_t \end{bmatrix} \alpha_t, \hspace{5mm} y_t = x_t + v_t, \hspace{5mm} s_t = x_t,$$ where $x_t$ is the object's position, $\nu_t$ is the object's velocity, $\alpha_t$ is the object's instantaneous acceleration due to external forces, and $v_t$ is measurement noise. We take $\delta_t = 0.01$ and initialize $x_0 = 0$. In this system, the optimal value of $\gamma$ is $\gamma^{\star} \approx 35.64$.

We benchmark the performance of the pathlength-optimal filter against that of the Kalman filter (e.g. the $H_2$-optimal filter).  Recall that the key innovation of the pathlength-optimal filter relative to standard filters is that the pathlength-optimal filter is designed to achieve low regret when the measurement disturbance $v$ has low pathlength; for this reason, we focus on measuring how the performance varies across many different values of $v$. For simplicity, we take the driving disturbance $\alpha$ to be picked i.i.d from a standard Gaussian across all of our experiments. In Figure \ref{ones-fig}, we plot the relative performance of the pathlength-optimal filter against that of the Kalman filter when the measurement disturbance is constant, i.e. $v_t = 1$ for all $t$. This disturbance has high energy but zero pathlength - as expected, the pathlength-optimal filter easily beats the Kalman filter, achieving orders-of-magnitude less estimation error. In Figure \ref{sine-1-fig}, $v$ varies sinusoidally with period $200\pi$. Again, the pathlength-optimal filter outperforms the Kalman filter by orders of magnitude. We next consider a sinusoidal disturbance with period $20\pi$ in Figure \ref{sine-2-fig}. While this disturbance has a higher pathlength, the pathlength-optimal filter is still able to outperform the Kalman filter. Finally, in Figure \ref{sine-3-fig} we consider an adversarial case, where the measurement disturbance $v$ oscillates rapidly and hence has very high pathlength. We see that the pathlength-optimal filter is outperformed by the Kalman filter; this is entirely unsurprising, since the pathlength-optimal filter is specifically designed under the assumption that $v$ varies slowly over time. Together, these four plots are consistent with the message of this paper: when the pathlength of the measurement disturbance is small, the pathlength-optimal filter produces very accurate estimates of the state and outperforms standard filtering algorithms.

\begin{figure}[htb]
\centering
\begin{subfigure}{0.45\textwidth}
\includegraphics[width=\columnwidth]{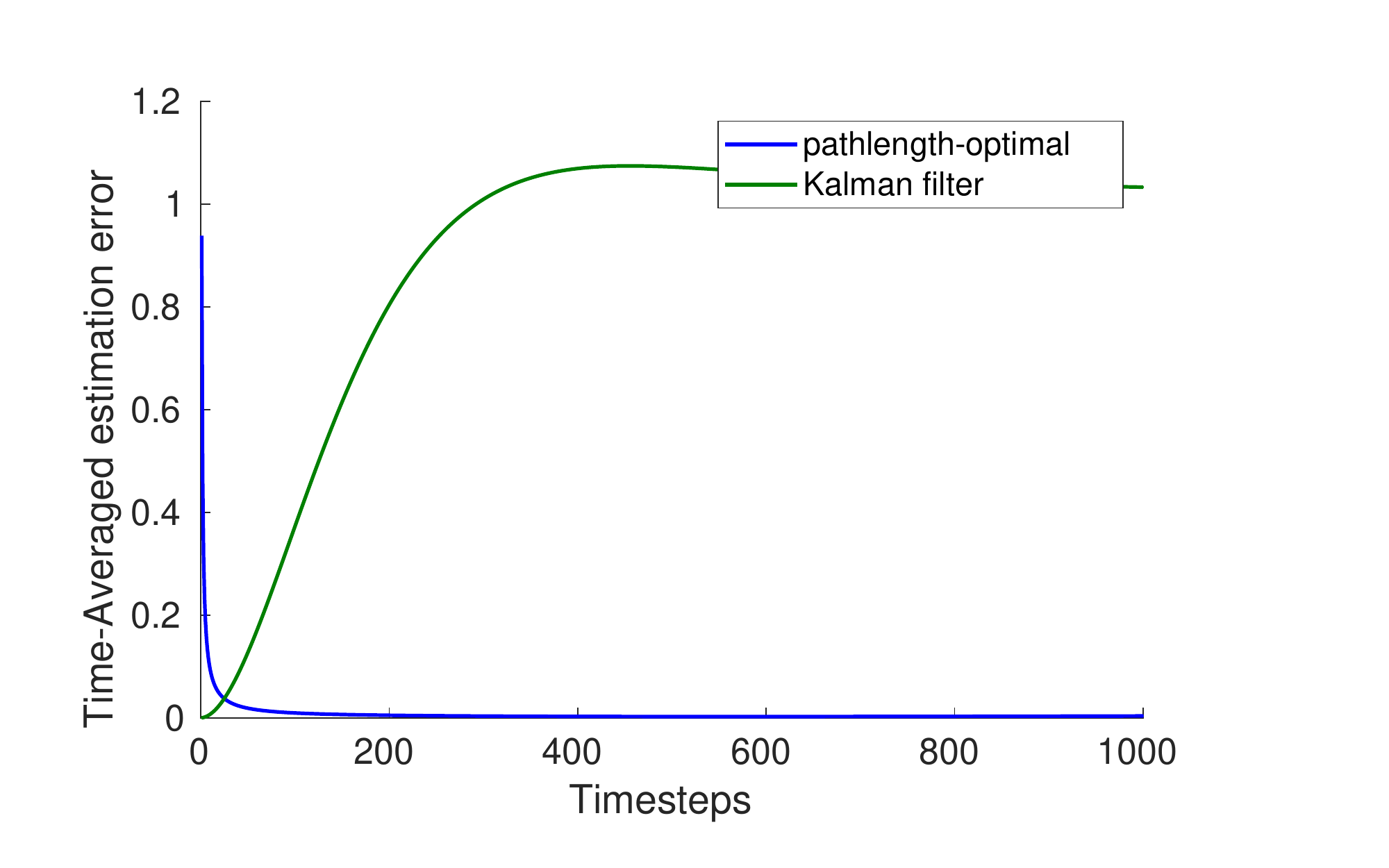}
\caption{The driving disturbance is drawn i.i.d from a standard Gaussian  and the measurement disturbance is constant ($v_t = 1$ for all $t$). The pathlength of the measurement disturbance is zero, though its energy is large.}
\label{ones-fig}
\end{subfigure} \hfil
\begin{subfigure}{0.45\textwidth}
\includegraphics[width=\columnwidth]{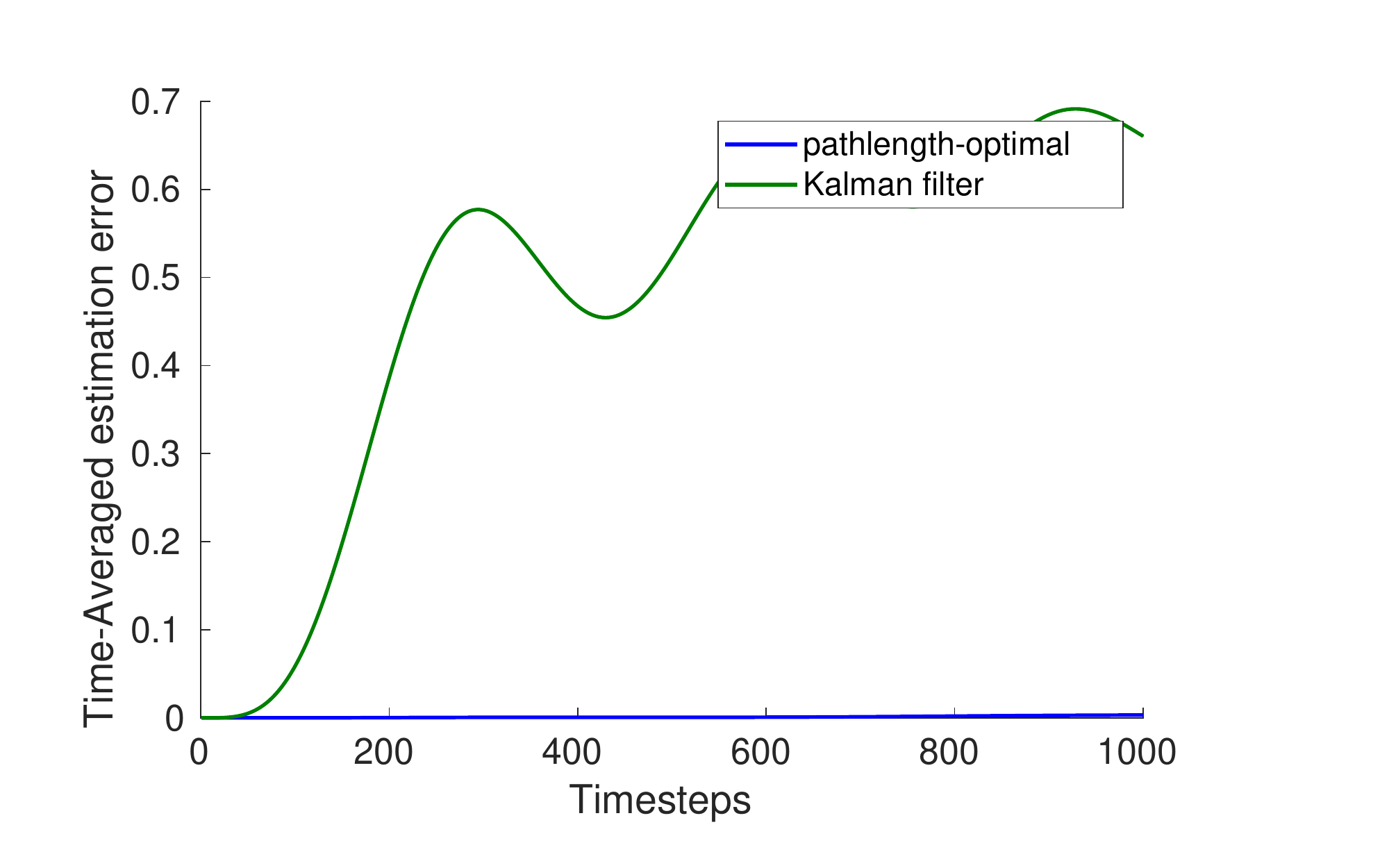}
\caption{The driving disturbance is drawn i.i.d from a standard Gaussian  and the measurement disturbance varies sinusoidally with period $200\pi$. The pathlength of the measurement disturbance is small, relative to its energy.}
\label{sine-1-fig}
\end{subfigure}
\begin{subfigure}{0.45\textwidth}
\includegraphics[width=\columnwidth]{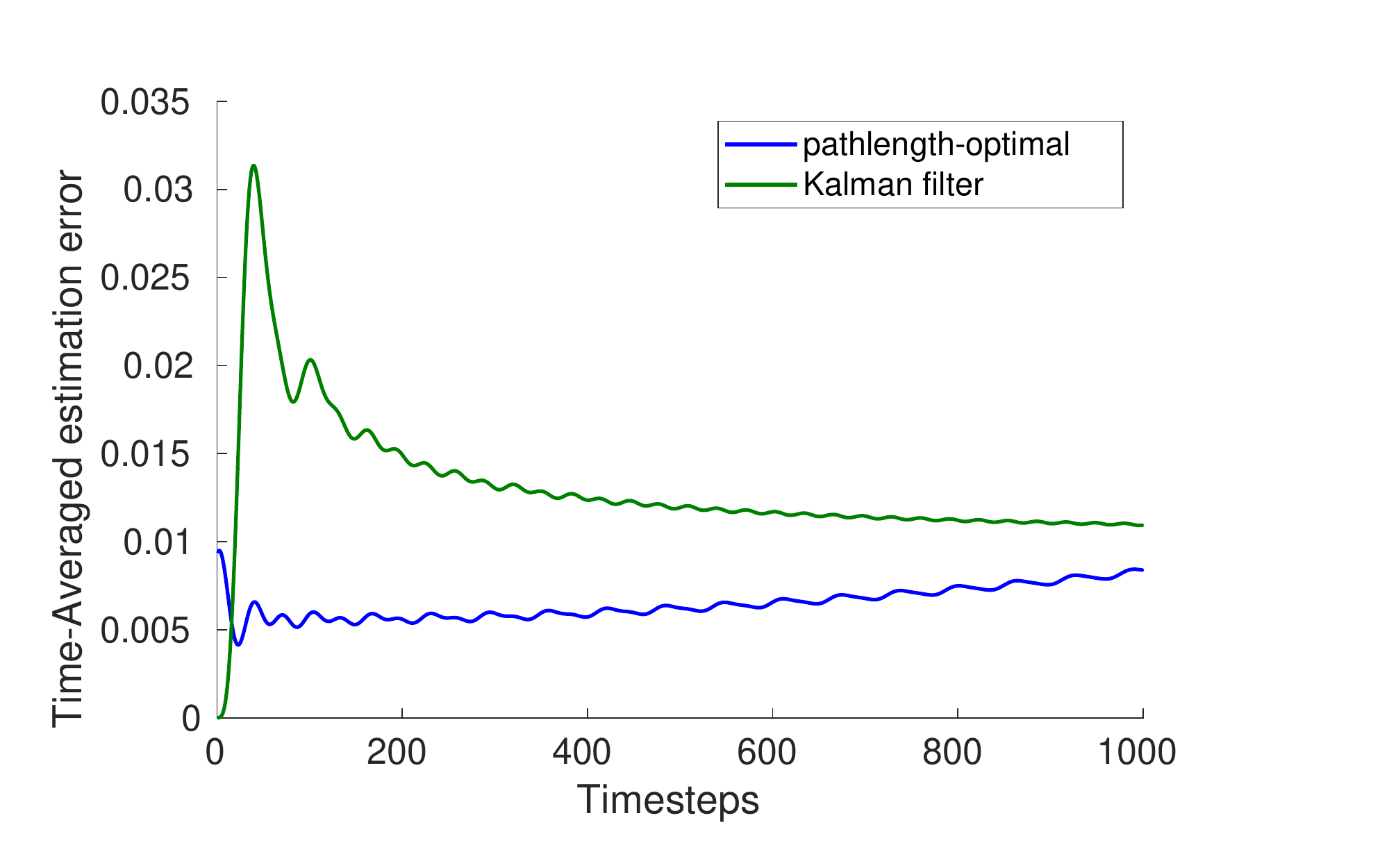}
\caption{The driving disturbance is drawn i.i.d from a standard Gaussian  and the measurement disturbance varies sinusoidally with period $20\pi$. The pathlength of the measurement disturbance is moderate, relative to its energy.}
\label{sine-2-fig}
\end{subfigure} \hfil
\begin{subfigure}{0.45\textwidth}
\includegraphics[width=\columnwidth]{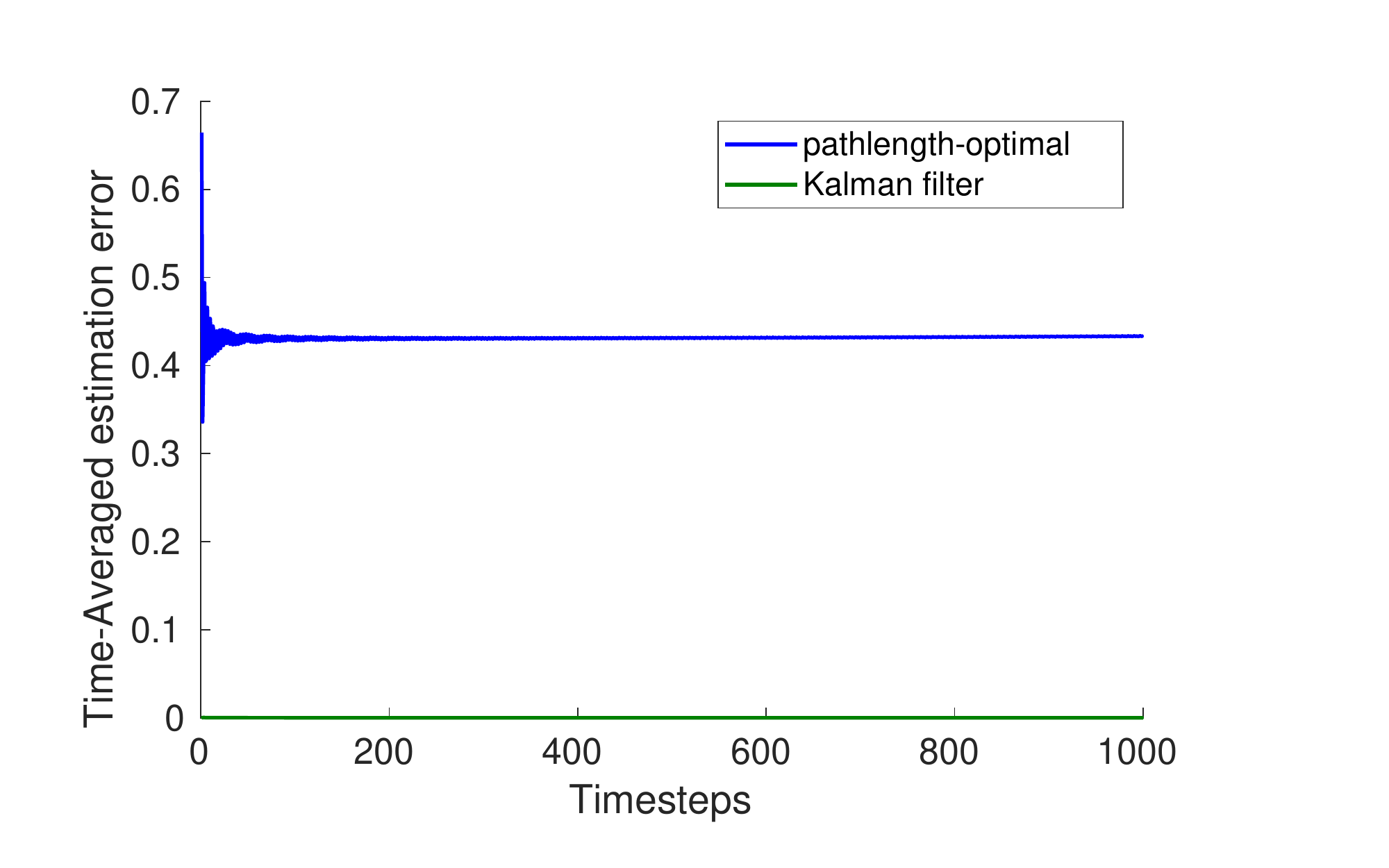}
\caption{The driving disturbance is drawn i.i.d from a standard Gaussian  and the measurement disturbance varies sinusoidally with period $2\pi$. The pathlength of the measurement disturbance is large, relative to its energy.}
\label{sine-3-fig}
\end{subfigure}
\label{filter-plots}
\caption{Relative performance of the pathlength-optimal filter and the Kalman filter.}
\end{figure}

\clearpage
\printbibliography

\clearpage

\section{Appendix}
We present proofs of some of the key lemmas used in the main body.

\begin{lemma} \label{delta-factorization-lemma}
Let $H(z) = C(zI - A)^{-1}B,$ where $(A, B)$ is stabilizable and $(A, C)$ is detectable. The following canonical factorizations hold: $$ I + H^*(z^{-*})H(z) = \Delta_1^*(z^{-*}) \Delta_1(z), \hspace{3mm} I + H(z)H^*(z^{-*}) = \Delta_2(z) \Delta_2^*(z^{-*}), $$ where we define the causal operators $$\Delta_1(z) = \Sigma_1^{1/2}(I + K_1^*(zI - A)^{-1}B),  \hspace{3mm} \Delta_2(z) = (I + C(zI - A)^{-1}K_2)\Sigma_2^{1/2}$$ and define
$$ K_1 = \Sigma_1^{-1} B^* P_1 A, \hspace{3mm} \Sigma_1 = I + B^*P_1B,$$
$$ K_2 = AP_2C^*\Sigma_2^{-1}, \hspace{3mm} \Sigma_2 = I + CP_2C^*,$$
where $P_1$ and $P_2$  are the solutions to the Riccati equations $$C^*C - P_1 + A^*P_1A - A^* P_1 B (I + B^*P_1B)^{-1}  B^* P_1 A = 0$$ and $$BB^* - P_2 + AP_2A^* - AP_2C^* (I + CP_2C^*)^{-1} A^*P_2C = 0.$$ 

\end{lemma}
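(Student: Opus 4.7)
The plan is to prove this via the classical state-space \emph{completion-of-squares} technique for spectral factorization of Popov functions, exactly analogous to the manipulation already carried out for $\Lambda_2(P_2)$ in the proof of Theorem~\ref{pathlength-optimal-controller-thm-ih}. For the first factorization, the starting observation is that
$$I + H^*(z^{-*})H(z) = \begin{bmatrix} B^*(z^{-*}I - A^*)^{-1} & I \end{bmatrix} \begin{bmatrix} C^*C & 0 \\ 0 & I \end{bmatrix} \begin{bmatrix} (zI - A)^{-1}B \\ I \end{bmatrix},$$
and so the proof reduces to manipulating this sandwiched matrix into a factored form whose outer factors are causal and whose middle factor is a signature matrix.

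The first step is to apply the algebraic identity behind Lemma~5 of the paper: for any Hermitian $P$, the middle matrix can be replaced by
$$\begin{bmatrix} C^*C - P + A^*PA & A^*PB \\ B^*PA & I + B^*PB \end{bmatrix}$$
without changing the value of the sandwiched quadratic form, since the added cross-terms telescope through $(zI-A)^{-1}B$ and its conjugate. The second step is a block LDU decomposition (Schur complement with respect to the $(2,2)$-block): with $\Sigma_1 = I + B^*P_1B$ and $K_1 = \Sigma_1^{-1}B^*P_1A$, the middle matrix factors as
$$\begin{bmatrix} I & K_1^* \\ 0 & I \end{bmatrix} \begin{bmatrix} \Gamma_1 & 0 \\ 0 & \Sigma_1 \end{bmatrix} \begin{bmatrix} I & 0 \\ K_1 & I \end{bmatrix},$$
where $\Gamma_1 = C^*C - P_1 + A^*P_1A - K_1^*\Sigma_1 K_1$. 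Choosing $P_1$ so that $\Gamma_1 = 0$ is exactly the stated Riccati equation. Substituting this decomposition back into the sandwich immediately yields $I + H^*(z^{-*})H(z) = \Delta_1^*(z^{-*})\Delta_1(z)$ with $\Delta_1$ as defined in the statement.

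The third step is to justify that the desired $P_1$ exists and that the resulting $\Delta_1(z)$ is indeed causal and bounded (i.e.\ has all poles strictly inside the unit disk). Under the stabilizability of $(A,B)$ and detectability of $(A,C)$ — which dualize to the standard hypotheses for the filtering-type Riccati — a unique positive semidefinite stabilizing solution $P_1$ exists (e.g.\ Theorem~E.6.2 in \cite{kailath2000linear}). Stabilizing means that $A - B\Sigma_1^{-1}B^*P_1A = A - BK_1$ has all eigenvalues strictly inside the unit disk, which is precisely the condition that the poles of $\Delta_1(z) = \Sigma_1^{1/2}(I + K_1^*(zI-A)^{-1}B)$ lie inside the unit disk. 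The second factorization $I + H(z)H^*(z^{-*}) = \Delta_2(z)\Delta_2^*(z^{-*})$ then follows by duality: apply the first factorization to $\tilde H(z) = H^*(z^{-*}) = B^*(z^{-*}I - A^*)^{-1}C^*$, which has state-space parameters $(A^*, C^*, B^*)$, and read off that the roles of $(A,B,C)$ are interchanged via $A \to A^*$, $B \to C^*$, $C \to B^*$, yielding exactly the Riccati equation for $P_2$ and the formulas for $K_2, \Sigma_2$ in the statement.

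The only genuinely subtle step is verifying that the hypotheses $(A,B)$ stabilizable and $(A,C)$ detectable suffice for \emph{both} Riccati equations to admit stabilizing positive semidefinite solutions; the first equation needs $(A,B)$ stabilizable together with detectability of $(A,C)$, and the dual statement requires the mirrored hypotheses on the dualized system, which one must check translate correctly. Everything else is a routine algebraic verification, and no novel technique is required beyond the manipulations already used elsewhere in the paper.
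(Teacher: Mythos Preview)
Your proposal is correct and follows essentially the same route as the paper's own proof: write the Popov function as a sandwiched quadratic form, invoke the equivalence-class identity (the paper uses Lemma~\ref{equivalence-class-lemma-1} for the first factorization rather than Lemma~\ref{equivalence-class-lemma-2}, but these are duals), perform the block LDU decomposition, and take $P_1$ to be the stabilizing Riccati solution; the paper handles the second factorization by explicitly repeating the argument rather than via your duality shortcut, but the content is identical. One minor slip worth flagging: stability of $A-BK_1$ is what guarantees that $\Delta_1^{-1}$ is causal and bounded, not $\Delta_1$ itself, whose apparent poles (as written) are the eigenvalues of $A$ --- the paper's proof does not address this point either, so it is not a divergence from the reference argument.
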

\begin{proof}
We first factor $ I + H^*(z^{-*})H(z)$ as $ \Delta_1^*(z^{-*}) \Delta_1(z)$. We rewrite $ I + H^*(z^{-*})H(z)$ as $$  \begin{bmatrix} B^*(z^{-1}I - A^*)^{-1} & I \end{bmatrix} \begin{bmatrix} C^*C & 0 \\ 0 & I \end{bmatrix} \begin{bmatrix} (zI - A)^{-1}B \\ I \end{bmatrix}.$$ 
In light of Lemma \ref{equivalence-class-lemma-1}, we see that this equals
$$  \begin{bmatrix} B^*(z^{-1}I - A^*)^{-1} & I \end{bmatrix} \Lambda_1(P_1) \begin{bmatrix} (zI - A)^{-1}B \\ I \end{bmatrix}.$$
where $P_1$ is an arbitrary Hermitian matrix and we define $$\Lambda_1(P_1) = \begin{bmatrix} C^*C - P_1 + A^*P_1A & A^*P_1B \\ B^*P_1A & I + B^*P_1B \end{bmatrix}.$$ Notice that the $\Lambda_1(P_1)$ can be factored as $$\begin{bmatrix} I & K_1^*(P_1) \\ 0 & I \end{bmatrix} \begin{bmatrix} \Gamma_1(P_1) & 0 \\ 0 & \Sigma_1(P_1) \end{bmatrix} \begin{bmatrix} I & 0 \\ K_1(P_1) & I \end{bmatrix}, $$
where we define $$\Gamma_1(P_1) =  C^*C - P_1 + A^*P_1A - K_1^*(P_1)\Sigma_1(P_1) K_1(P_1),$$ 
$$ K_1(P_1) = \Sigma_1(P_1)^{-1} B^* P_1 A, \hspace{3mm} \Sigma_1(P_1) = I + B^*P_1B.$$
By assumption, $(A, B)$ is stabilizable and $(A, C)$ is detectable, therefore $(A^*, C^*)$ is stabilizable and $(A^*, B^*)$ is detectable. It follows that the Riccati equation $\Gamma_1(P_1) = 0$ has a unique stabilizing solution (see, e.g. Theorem E.6.2 in ``Linear Estimation" by Kailath, Sayed, and Hassibi).  Suppose $P_1$ is chosen to be this solution, and define $K_1 = K_1(P_1)$, $\Sigma_1 = \Sigma_1(P_1)$. We immediately obtain the canonical factorization $ I + H^*(z^{-*})H(z)  =  \Delta_1^*(z^{-*})\Delta_1(z), $ where we define 
\begin{equation} \label{delta-1-definition}
\Delta_1(z) = \Sigma_1^{1/2}(I + K_1(zI - A)^{-1}B).
\end{equation}

The factorization $ I + H(z)H^*(z^{-*}) =  \Delta_2(z)\Delta_2^*(z^{-*})$ can be obtained almost identically. We rewrite $ I + H(z) H^*(z^{-*})$ as
$$  \begin{bmatrix} C(zI - A)^{-1} & I \end{bmatrix} \begin{bmatrix} BB^* & 0 \\ 0 & I \end{bmatrix} \begin{bmatrix} (z^{-1}I - A^*)^{-1}C^* \\ I \end{bmatrix}.$$
In light of Lemma \ref{equivalence-class-lemma-1}, we see that this equals
$$ \begin{bmatrix} C(zI - A)^{-1} & I \end{bmatrix} \Lambda_2(P_2) \begin{bmatrix} (z^{-1}I - A^*)^{-1}C^* \\ I \end{bmatrix},$$
where $P_2$ is an arbitrary Hermitian matrix and we define $$\Lambda_2(P_2) = \begin{bmatrix} BB^* - P_2 + AP_2A^* & AP_2C^* \\ CP_2A^* & I + CP_2C^*\end{bmatrix}.$$ Notice that the $\Lambda_2(P_2)$ can be factored as $$\begin{bmatrix} I & K_2(P_2) \\ 0 & I \end{bmatrix} \begin{bmatrix} \Gamma_2(P_2) & 0 \\ 0 & \Sigma_2(P_2) \end{bmatrix} \begin{bmatrix} I & 0 \\ K_2^*(P_2) & I \end{bmatrix}, $$
where we define $$\Gamma_2(P_2) = BB^* - P_2 + AP_2A^* - K_2(P_2)\Sigma_2(P_2) K_2^*(P_2),$$ 
$$ K_2(P_2) = AP_2C^*\Sigma_2(P_2)^{-1}, \hspace{3mm} \Sigma_2(P_2) = I + CP_2C^*.$$
By assumption, $(A, B)$ is stabilizable and $(A, C)$ is detectable, therefore the Riccati equation $\Gamma_2(P_2) = 0$ has a unique stabilizing solution.  Suppose $P_2$ is chosen to be this solution, and define $K_2 = K_2(P_2)$, $\Sigma_2 = \Sigma_2(P_2)$. We immediately obtain the canonical factorization $ I + H(z) H^*(z^{-*}) = \Delta_2(z) \Delta_2^*(z^{-*}), $ where we define 
\begin{equation} \label{delta-2-definition}
\Delta_2(z) = (I + C(zI - A)^{-1}K_2 )\Sigma_2^{1/2}.
\end{equation}
\end{proof}


\begin{lemma} \label{center-factorization-lemma}
The following canonical factorization holds: $$\gamma^{-2}I_m + \gamma^{-4}J\Delta_1^{-1}\Delta_1^{-*}J^* = \Delta_3^*\Delta_3,$$ where $\Delta_1(z)$ is defined as in Lemma \ref{delta-factorization-lemma} and $\Delta_3(z)$ is the causal operator $$\Delta_3(z) = \gamma^{-1}\Sigma_3^{1/2}(I + K_3(zI - A_1)^{-1}A_1W_1L^*), $$ where we define $A_1 = A - BK_1$ and $W_1$ is  the solution of the Lyapunov equation $$  \gamma^{-2}B\Sigma^{-1}B^* - W_1 + A_1W_1A_1^*  = 0,$$ and we define $K_3 = K_3(P_3)$ and $\Sigma_3 = \Sigma_3(P_3)$
where $$ K_3(P_3) = \Sigma_3(P_3)^{-1} (L  + LW_1A_1^*P_3 A_1), \hspace{3mm} \Sigma_3(P_3) = \Sigma_0 + LW_1A_1^*P_3 A_1W_1L^*$$ and $P_3$ is the solution of the Riccati equation 
$$-P_3 + A_1^*P_3A_1 - K_3^*(P_3)\Sigma_3(P_3) K_3(P_3) = 0.$$

\end{lemma}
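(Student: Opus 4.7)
The approach mirrors the $I + HH^*$ factorization of Lemma 1, with two preliminary reductions: first, simplifying $R := J\Delta_1^{-1}$ to its innovations form, and second, converting the input covariance $\gamma^{-2}B\Sigma_1^{-1}B^*$ into a state covariance via the Lyapunov solution $W_1$.

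First I would compute $R$ in closed form. Using the matrix-inversion identity $(I + K_1(zI-A)^{-1}B)^{-1} = I - K_1(zI - A_1)^{-1}B$ with $A_1 = A - BK_1$ (and $A_1$ stable since $\Delta_1$ is outer), the cascade rule gives $R(z) = L(zI - A_1)^{-1}B\Sigma_1^{-1/2}$. After substituting $\gamma^{-2}B\Sigma_1^{-1}B^* = W_1 - A_1W_1A_1^*$ from the Lyapunov equation, one finds
\begin{equation*}
\gamma^{-4}RR^* = \gamma^{-2}L(zI-A_1)^{-1}(W_1 - A_1W_1A_1^*)(z^{-1}I-A_1^*)^{-1}L^*.
\end{equation*}
Applying the algebraic identities $(zI-A_1)^{-1}A_1 = z(zI-A_1)^{-1} - I$ and its adjoint to telescope the quadratic $(zI-A_1)^{-1}W_1(z^{-1}I-A_1^*)^{-1}$ term yields
\begin{equation*}
\gamma^{-2}I + \gamma^{-4}RR^* = \gamma^{-2}\bigl[\Sigma_0 + L(zI-A_1)^{-1}A_1W_1L^* + LW_1A_1^*(z^{-1}I-A_1^*)^{-1}L^*\bigr],
\end{equation*}
with $\Sigma_0 := I + LW_1L^*$. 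This may be packaged as the self-adjoint quadratic form
\begin{equation*}
\gamma^{-2} \begin{bmatrix} LW_1A_1^*(z^{-1}I-A_1^*)^{-1} & I \end{bmatrix} \begin{bmatrix} 0 & L^* \\ L & \Sigma_0 \end{bmatrix} \begin{bmatrix} (zI-A_1)^{-1}A_1W_1L^* \\ I \end{bmatrix},
\end{equation*}
which exhibits the \emph{row-style} structure needed to extract a $\Delta_3^*\Delta_3$ factorization.

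I would then invoke the row-style analog of Lemma 5, proved by the same algebra: for any Hermitian $P_3$, the middle factor may be augmented by the block matrix with entries $A_1^*P_3A_1 - P_3$, $A_1^*P_3A_1W_1L^*$, $LW_1A_1^*P_3A_1$, and $LW_1A_1^*P_3A_1W_1L^*$ without changing the quadratic form. After this substitution the middle becomes
\begin{equation*}
\Lambda_3(P_3) = \begin{bmatrix} A_1^*P_3A_1 - P_3 & K_3^*\Sigma_3 \\ \Sigma_3K_3 & \Sigma_3 \end{bmatrix},
\end{equation*}
with $\Sigma_3$ and $K_3$ exactly as defined in the lemma statement. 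A Schur (LDU) decomposition of $\Lambda_3(P_3)$ isolates the Schur complement $A_1^*P_3A_1 - P_3 - K_3^*\Sigma_3K_3$ in the $(1,1)$ position. Choosing $P_3$ to be the stabilizing solution of the Riccati equation $A_1^*P_3A_1 - P_3 - K_3^*\Sigma_3K_3 = 0$ kills this block and collapses $\Lambda_3(P_3)$ to $\begin{bmatrix} K_3^* \\ I \end{bmatrix}\Sigma_3\begin{bmatrix} K_3 & I \end{bmatrix}$. Substituting back into the quadratic form yields $\gamma^{-2}I + \gamma^{-4}RR^* = \Delta_3^*\Delta_3$ with $\Delta_3(z) = \gamma^{-1}\Sigma_3^{1/2}(I + K_3(zI-A_1)^{-1}A_1W_1L^*)$, which is causal because $A_1$ is stable.

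The main obstacle is verifying existence of a stabilizing Hermitian solution $P_3$ to the Riccati equation. Because $A_1$ is already stable, the relevant detectability condition on the synthetic pair $(A_1, A_1W_1L^*)$ is automatic, so the standard existence theorem used to obtain $P_1$ and $P_2$ in Lemma 1 (Theorem E.6.2 of Kailath, Sayed, and Hassibi) applies directly. Once $P_3$ is in hand, the outer factor property of $\Delta_3$ (i.e., that $\Delta_3^{-1}$ is causal and bounded) follows from the stabilizing property via the standard matrix-inversion computation.
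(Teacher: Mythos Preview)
Your proposal is correct and follows essentially the same route as the paper: reduce $J\Delta_1^{-1}$ to its innovations form $L(zI-A_1)^{-1}B\Sigma_1^{-1/2}$, use the Lyapunov solution $W_1$ to collapse the input term, rewrite the result as the self-adjoint quadratic form with center $\begin{bmatrix} 0 & L^* \\ L & \Sigma_0 \end{bmatrix}$, then apply the row-style equivalence-class lemma (Lemma~\ref{equivalence-class-lemma-1} in the paper) and an LDU/Riccati step to peel off $\Delta_3$. The only cosmetic difference is that the paper handles the $W_1$ reduction by a second invocation of the equivalence-class lemma (Lemma~\ref{equivalence-class-lemma-2}) rather than by your direct telescoping identity, but the algebra is identical.
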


\begin{proof}
Recall that $$\Delta_1(z) = \Sigma_1^{1/2}(I + K_1(zI - A)^{-1}B),$$ where $K_1, \Sigma_1$ are defined in Lemma \ref{delta-factorization-lemma}. 
Therefore $$J(z)\Delta_1^{-1}(z) = L(zI - A_1)^{-1}B\Sigma_1^{-1/2}, $$ where we define $A_1 = A - BK_1$. We can hence rewrite $I + \gamma^{-2}J(z)\Delta_1^{-1}(z)\Delta_1^{-*}(z^{-*})J^*(z^{-*})$ as $$  \begin{bmatrix} L(zI - A_1)^{-1} & I \end{bmatrix} \begin{bmatrix} \gamma^{-2}B\Sigma_1^{-1}B^* & 0 \\ 0 & I \end{bmatrix} \begin{bmatrix} (z^{-1}I - A_1^*)^{-1}L^* \\ I \end{bmatrix}.$$ In light of Lemma \ref{equivalence-class-lemma-2}, the center matrix can be replaced by $$\begin{bmatrix} \gamma^{-2}B\Sigma_1^{-1}B^* - W_1 + A_1PA_1^* & A_1W_1L^*  \\  LW_1A_1^* & I + LW_1L^* \end{bmatrix},  $$ where $W_1$ is an arbitrary Hermitian matrix of appropriate dimensions. Suppose $W_1$ is chosen as the solution of the Lyapunov equation $$  \gamma^{-2}B\Sigma^{-1}B^* - W_1 + A_1W_1A_1^*  = 0.$$ Let $A_1W_1L^* = A_1W_1L^*$. We see that $I + \gamma^{-2}J(z)\Delta_1^{-1}(z)\Delta_1^{-*}(z^{-*})J^*(z^{-*})$ can be written as $$I + LW_1L^* + L(zI - A_1)^{-1}A_1W_1L^* + LW_1A_1^*(z^{-1} - A_1^*)^{-1}L^*.$$ This in turn can be expressed as $$ \begin{bmatrix} LW_1A_1^*(z^{-1}I - A_1^*)^{-1} & I \end{bmatrix} \begin{bmatrix} 0 & L^* \\ L & \Sigma_0 \end{bmatrix} \begin{bmatrix} (zI - A_1)^{-1}A_1W_1L^* \\ I \end{bmatrix}, $$ where we define $\Sigma_0 = I + LW_1L^*$.  Applying Lemma \ref{equivalence-class-lemma-1}, the center matrix can be replaced by $$\Lambda_3(P_3) = \begin{bmatrix} -P_3 + A_1^*P_3A_1 & L^* + A_1^*P_3A_1W_1L^*\\ L  + LW_1A_1^*P_3 A_1 & I + LW_1L^* + LW_1A_1^*P_3 A_1W_1L^* \end{bmatrix},$$ where $P_3$ is an arbitrary Hermitian matrix. Notice that $\Lambda_3(P_3)$ factors as $$\begin{bmatrix} I & K_3^*(P_3) \\ 0 & I \end{bmatrix} \begin{bmatrix} \Gamma_3(P_3) & 0 \\ 0 & \Sigma_3(P_3) \end{bmatrix} \begin{bmatrix} I & 0 \\ K_3(P_3) & I \end{bmatrix}, $$
where we define $$\Gamma_3(P_3) =  -P_3 + A_1^*P_3A_1 - K_3^*(P_3)\Sigma_3(P_3) K_3(P_3),$$ 
$$ K_3(P_3) = \Sigma_3(P_3)^{-1} (L  + LW_1A_1^*P_3 A_1), \hspace{3mm} \Sigma_3(P_3) = \Sigma_0 + LW_1A_1^*P_3 A_1W_1L^*.$$
Notice that $A_1$ is stable, therefore the Riccati equation $\Gamma_3(P_3) = 0$ has a unique stabilizing solution.  Suppose $P_3$ is chosen to be this solution, and define $K_3 = K_3(P_3)$, $\Sigma_3 = \Sigma_3(P_3)$. We immediately obtain the canonical factorization 
$ \gamma^{-2}I_m + \gamma^{-4}J(z)\Delta_1^{-1}(z)\Delta_1^{-*}(z^{-*})J^*(z^{-*}) = \Delta_3^*(z^{-*})\Delta_3(z), $ where we define 
\begin{equation*}
\Delta_3(z) = \gamma^{-1}\Sigma_3^{1/2}(I + K_3(zI - A_1)^{-1}A_1W_1L^*).
\end{equation*}

\end{proof}

\begin{lemma} \label{Q-decomposition-lemma}
Let $\Delta_3(z)$ be defined as in Lemma \ref{center-factorization-lemma}. Then $\Delta_3(z)Q(z)$ can be decomposed into causal and anticausal components as 
$$ \Delta_3(z)Q(z) = 
  LW_2C^*\Sigma_2^{-1/2} + L(zI - A)^{-1}AW_2
C^*\Sigma_2^{-1/2} 
 + LW_2A_2^*(z^{-1}I - A_2^*)^{-1} C^*\Sigma_2^{-1/2},
$$
where $W_2$ is the solution of the Lyapunov equation $$ \hat{B}B^* - W_2 +  \hat{A}W_2A_2^* = 0,$$ and we define $K_1, K_2$ as in Lemma \ref{delta-factorization-lemma}, $K_3, \Sigma_3$ as in Lemma \ref{center-factorization-lemma} and let $A_1 = A - BK_1, A_2 = A - K_1C$ and $$\hat{A} = \begin{bmatrix} A_1 & A_1W_1L^*L \\ 0 & A \end{bmatrix}, \hspace{3mm} \hat{B} = \begin{bmatrix} 0 \\ B \end{bmatrix}, \hspace{3mm} \hat{L} = \begin{bmatrix} \gamma^{-1}\Sigma_3^{1/2}K_3 & \gamma^{-1} \Sigma_3^{1/2}L \end{bmatrix}.$$
\end{lemma}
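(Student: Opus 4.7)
The plan is to express $\Delta_3(z)Q(z)$ as the product of a single causal resolvent and a single strictly anticausal resolvent, and then to apply a Stein-equation identity to split that product into its constant, strictly causal, and strictly anticausal parts.

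First I would simplify the strictly anticausal factor $H^*(z^{-*})\Delta_2^{-*}(z^{-*})$. From Lemma 1, $\Delta_2(z) = (I + C(zI-A)^{-1}K_2)\Sigma_2^{1/2}$, so the resolvent inversion identity $(I + P(\zeta I-A)^{-1}B)^{-1} = I - P(\zeta I - (A-BP))^{-1}B$ gives
\[ \Delta_2^{-*}(z^{-*}) = \bigl(I - K_2^*(z^{-1}I - A_2^*)^{-1}C^*\bigr)\Sigma_2^{-1/2}, \qquad A_2 = A - K_2C. \]
The identity $C^*K_2^* = A^* - A_2^* = (z^{-1}I - A_2^*) - (z^{-1}I - A^*)$ then telescopes the double resolvent appearing in the product: $(z^{-1}I - A^*)^{-1}C^*K_2^*(z^{-1}I - A_2^*)^{-1} = (z^{-1}I - A^*)^{-1} - (z^{-1}I - A_2^*)^{-1}$. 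Substituting and cancelling the matching pieces leaves
\[ H^*(z^{-*})\Delta_2^{-*}(z^{-*}) = B^*(z^{-1}I - A_2^*)^{-1}C^*\Sigma_2^{-1/2}, \]
a single strictly anticausal resolvent.

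Second I would put $\Delta_3(z)J(z)$ in augmented state-space form. Expanding, $\Delta_3(z)J(z) = \gamma^{-1}\Sigma_3^{1/2}L(zI-A)^{-1}B + \gamma^{-1}\Sigma_3^{1/2}K_3(zI-A_1)^{-1}A_1W_1L^* \cdot L(zI-A)^{-1}B$, a sum of a single-resolvent term in $A$ and a double-resolvent term coupling $A_1$ and $A$. A block upper-triangular state matrix whose off-diagonal block is exactly $A_1W_1L^*L$ consolidates both terms into $\Delta_3(z)J(z) = \hat{L}(zI - \hat{A})^{-1}\hat{B}$, with $\hat{A},\hat{B},\hat{L}$ as in the statement; this is verified by the standard block-triangular inversion $(zI-\hat{A})^{-1} = \bigl[\begin{smallmatrix} (zI-A_1)^{-1} & (zI-A_1)^{-1}A_1W_1L^*L(zI-A)^{-1} \\ 0 & (zI-A)^{-1} \end{smallmatrix}\bigr]$. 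Combining with step one,
\[ \Delta_3(z)Q(z) = \hat{L}(zI-\hat{A})^{-1}\hat{B}B^*(z^{-1}I - A_2^*)^{-1}C^*\Sigma_2^{-1/2}. \]

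Finally I would apply the Stein-equation decomposition. Defining $W_2$ as the solution of $W_2 - \hat{A}W_2 A_2^* = \hat{B}B^*$ (equivalently, the Lyapunov equation in the statement), a direct algebraic check---multiply both sides by $(zI-\hat{A})$ on the left and $(z^{-1}I - A_2^*)$ on the right, then collapse the cross-terms using the Stein equation---establishes
\[ (zI - \hat{A})^{-1}\hat{B}B^*(z^{-1}I - A_2^*)^{-1} = W_2 + (zI-\hat{A})^{-1}\hat{A}W_2 + W_2 A_2^*(z^{-1}I - A_2^*)^{-1}. \]
Left-multiplying by $\hat{L}$ and right-multiplying by $C^*\Sigma_2^{-1/2}$ then produces exactly the three terms: a constant part $\hat{L}W_2C^*\Sigma_2^{-1/2}$, a strictly causal part with pole matrix $\hat{A}$, and a strictly anticausal part with pole matrix $A_2^*$, matching the claim.

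The substantive step is the telescoping simplification in the first paragraph; without collapsing $H^*(z^{-*})\Delta_2^{-*}(z^{-*})$ to a single anticausal resolvent, the final product would involve two double-resolvent chains and the partial-fraction split would not separate so cleanly. The remaining work---the block-triangular realization and the Stein verification---is mechanical bookkeeping, with existence and uniqueness of $W_2$ following from stability of $A_2$ (via detectability of $(A,C)$ in Lemma 1) and the fact that the spectra of $\hat{A}$ and of the reciprocals of the spectrum of $A_2^*$ are disjoint.
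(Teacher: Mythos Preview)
Your proof is correct and follows essentially the same route as the paper: simplify $H^*(z^{-*})\Delta_2^{-*}(z^{-*})$ to the single anticausal resolvent $B^*(z^{-1}I-A_2^*)^{-1}C^*\Sigma_2^{-1/2}$, realize $\Delta_3(z)J(z)$ as $\hat L(zI-\hat A)^{-1}\hat B$ via the block upper-triangular state matrix, and then split the resulting mixed causal/anticausal product using the Sylvester equation $\hat BB^*-W_2+\hat A W_2A_2^*=0$. The only cosmetic difference is in the last step: the paper writes the product in the sandwich form
\[
\begin{bmatrix}\hat L(zI-\hat A)^{-1}&I\end{bmatrix}\begin{bmatrix}\hat BB^*&0\\0&0\end{bmatrix}\begin{bmatrix}(z^{-1}I-A_2^*)^{-1}C^*\Sigma_2^{-1/2}\\I\end{bmatrix}
\]
and invokes Lemma~\ref{general-equivalence-class-lemma} to replace the center matrix, whereas you verify the equivalent Stein identity $(zI-\hat A)^{-1}\hat BB^*(z^{-1}I-A_2^*)^{-1}=W_2+(zI-\hat A)^{-1}\hat A W_2+W_2A_2^*(z^{-1}I-A_2^*)^{-1}$ by direct multiplication; these are the same computation in different packaging.
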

\begin{proof}
In Lemma \ref{delta-factorization-lemma} we found $\Delta_2(z)$: $$\Delta_2(z) = (I + C(zI - A)^{-1}K_2)\Sigma_2^{1/2}. $$ 
Therefore $$\Delta_2^{-*}(z^{-*}) = (I - K_2^*(z^{-1} - A_2^*)^{-1}C^*)\Sigma_2^{-1/2}. $$ It follows that $$H^*(z^{-*})\Delta_2^*(z^{-*}) = B^*(z^{-1}I - A_2^*)^{-1}C^*\Sigma_2^{-1/2}. $$ 
Similarly, $$ \Delta_3(z)J(z) = \hat{L}(zI - \hat{A})^{-1}\hat{B},$$ where we define $$\hat{A} = \begin{bmatrix} A_1 & A_1W_1L^*L \\ 0 & A \end{bmatrix}, \hspace{3mm} \hat{B} = \begin{bmatrix} 0 \\ B \end{bmatrix}, \hspace{3mm} \hat{L} = \begin{bmatrix} \gamma^{-1}\Sigma_3^{1/2}K_3 & \gamma^{-1} \Sigma_3^{1/2}L \end{bmatrix}.$$
Therefore, $\Delta_3(z)Q(z)$ can be rewritten as $$ \begin{bmatrix} \hat{L}(zI - \hat{A})^{-1} & I \end{bmatrix} \begin{bmatrix} \hat{B}B^* & 0 \\ 0 & 0 \end{bmatrix} \begin{bmatrix} (z^{-1}I - A_2^*)^{-1} C^*\Sigma_2^{-1/2} \\ I \end{bmatrix}.$$
In light of Lemma \ref{general-equivalence-class-lemma}, we see that the center matrix can be replaced by $$\begin{bmatrix} \hat{B}B^* - W_2 + \hat{A}W_2A_2^* & \hat{A}W_2C^*\Sigma_2^{-1/2}  \\  \hat{L}W_2A_2^* & \hat{L}W_2C^*\Sigma_2^{-1/2} \end{bmatrix},  $$ where $W_2$ is an arbitrary matrix of appropriate dimension. We take $W_2$ to be the solution of the Sylvester equation $$ \hat{B}B^* - W_2 +  \hat{A}W_2A_2^* = 0,$$ and immediately obtain the desired decomposition of $\Delta_3(z)Q(z)$ into causal and anticausal components.
\end{proof}

\begin{lemma} \label{equivalence-class-lemma-1}
For all $H, F$ and all Hermitian matrices $P$, we have
$$\begin{bmatrix} H^*(z^{-1}I - F^*)^{-1} & I \end{bmatrix} \Omega(P) \begin{bmatrix} (zI - F)^{-1}H \\ I \end{bmatrix} = 0, $$ where we define $$\Omega(P) = \begin{bmatrix} -P + F^*PF & F^*PH \\ H^*PF & H^*PH \end{bmatrix}. $$
\end{lemma}

\begin{proof}
This identity is essentially the ``transpose" of Lemma \ref{equivalence-class-lemma-2} and is easily verified via direct calculation.
\end{proof}

\begin{lemma}\label{equivalence-class-lemma-2}
For all $H, F$ and all Hermitian matrices $P$, we have
$$\begin{bmatrix} H(zI - F)^{-1} & I \end{bmatrix} \Omega(P) \begin{bmatrix} (z^{-1}I - F^*)^{-1}H^* \\ I \end{bmatrix} = 0, $$ where we define $$\Omega(P) = \begin{bmatrix} -P + FPF^* & FPH^* \\ HPF^* & HPH^* \end{bmatrix}. $$
\end{lemma}

\begin{proof}
This identity is a special case of Lemma \ref{general-equivalence-class-lemma}; it also appears as Lemma 12.3.3 in ``Indefinite-Quadratic Estimation and Control" by Hassibi, Sayed, and Kailath.
\end{proof}

\begin{lemma} \label{general-equivalence-class-lemma}
For all $H_1, H_2, F_1, F_2$ and all matrices $W$, we have
$$\begin{bmatrix} H_1(zI - F_1)^{-1} & I \end{bmatrix} \Omega(W) \begin{bmatrix} (z^{-1}I - F_2^*)^{-1}H_2^* \\ I \end{bmatrix} = 0, $$ where we define $$\Omega(W) = \begin{bmatrix} -W + F_1WF_2^* & F_1WH_2^* \\ H_1WF_2^* & H_1WH_2^* \end{bmatrix}. $$
\end{lemma}

\begin{proof}
Notice that $\Omega(W)$ can be rewritten as $$\Omega(W) = \begin{bmatrix} F_1 \\ H_1 \end{bmatrix} W \begin{bmatrix} F_2 & H_2 \end{bmatrix} - \begin{bmatrix} I \\0 \end{bmatrix} W \begin{bmatrix} I & 0 \end{bmatrix}.$$
The proof is immediate after observing that $$\begin{bmatrix} H_1(zI - F_1)^{-1} & I \end{bmatrix}\begin{bmatrix} F_1 \\ H_1 \end{bmatrix} = H_1(zI - F_1)^{-1}z, \hspace{3mm} \begin{bmatrix} F_2 & H_2 \end{bmatrix} \begin{bmatrix} (z^{-1}I - F_2^*)^{-1}H_2^* \\ I \end{bmatrix} = z^{-1}(z^{-1}I - F_2^*)^{-1}H_2^*.$$
\end{proof}

\end{document}


\onecolumn
\aistatstitle{Supplementary Material}

\section{PROOF OF THEOREM 4}

In this section, we present a complete derivation of the pathlength-optimal filter described in Theorem 4; the main text elides some details to comply with space requirements.

\begin{proof}
Our goal is to find a filter $K$ such that its associated transfer operator $T_K$ satisfies
\begin{equation} \label{filter-regret-gamma-cond}
\begin{bmatrix} w^* & v^* \end{bmatrix} (T_K^*T_K - T_{K_0}^*T_{K_0}) \begin{bmatrix} w \\ v \end{bmatrix} 
 \leq \gamma^2 \cdot (\textsc{energy}(w) + \textsc{pathlength}(v)), 
\end{equation}
where $T_{K_0}$ is the transfer operator associated to the optimal smoothed estimator $$ K_0(z) = J(z)H^*(z^{-*})(I + H(z)H^*(z^{-*}))^{-1}. $$

Moving to the  $z$-domain, condition (\ref{filter-regret-gamma-cond}) can be neatly expressed in terms of transfer matrices as
\begin{equation} \label{regret-cond-transfer-operator}
0 \prec \gamma^2 M^*(z^{-*})M(z) + T_K^*(z^{-*})T_K(z) - T_{K_0}^*(z^{-*})T_K(z),
\end{equation}
where we have defined $$M(z) = \begin{bmatrix} I_m & 0 \\ 0 &  M_v(z)\end{bmatrix} $$ and $M_v(z) = (1-z^{-1})I_p$. 
Define the unitary operator $$\theta = \begin{bmatrix} I & H^*(z^{-*}) \\ -H(z) & I \end{bmatrix} \begin{bmatrix} \Delta_1^{-1}(z) & 0 \\ 0 & -\Delta_2^{-*}(z^{-*}) \end{bmatrix},$$
where we define causal operators $\Delta_1(z), \Delta_2(z)$ such that $$ \Delta_1^*(z^{-*})\Delta_1(z) = I + H^*(z^{-*})H(z), \hspace{3mm} \Delta_2(z) \Delta_2^*(z^{-*}) = I + H(z)H^*(z^{-*}). $$
Notice that for every estimator $K$, we have $$\mathcal{T}_{K}(z)\theta(z) =  \begin{bmatrix} J(z)\Delta_1^{-1}(z) & K(z)\Delta_2(z) - J(z)H^*(z^{-*})\Delta_2^{-*}(z^{-*})  \end{bmatrix},$$ and in particular, we have $$\mathcal{T}_{K_0}\theta =  \begin{bmatrix} J(z)\Delta_1^{-1}(z) & 0 \end{bmatrix}.$$
Since $\theta(z)$ is unitary and commutes with $M(z)$, condition (\ref{regret-cond-transfer-operator}) is equivalent to 
\begin{equation} \label{regret-cond-transfer-operator-2}
0 \prec \gamma^2 M^*M + \theta^*\mathcal{T}_{K_0}^* \mathcal{T}_{K_0}\theta - \theta^*\mathcal{T}_{K}^* \mathcal{T}_{K}\theta.
\end{equation}
Define $$ R(z) = J(z)\Delta_1^{-1}(z), \hspace{3mm} Q(z) = J(z)H^*(z^{-*}))\Delta_2^{-*}(z^{-*}).$$
The right-hand side of (\ref{regret-cond-transfer-operator-2}) can be rewritten as 
\begin{equation*} 
\begin{bmatrix} \gamma^2 I_m & -R^* (K\Delta_2 - Q) \\ -(K\Delta_2 - Q)^*R & \gamma^2 M_v^*M_v - (K\Delta_2 - Q)^*(K\Delta_2 - Q)  \end{bmatrix}
\end{equation*}
where we suppress the dependence on $z$ to improve legibility.
Applying the Schur complement, we see that condition (\ref{regret-cond-transfer-operator}) is equivalent to
$$(K\Delta_2 - Q)^*(I_m + \gamma^{-2}RR^*) (K\Delta_2 - Q) \preceq \gamma^2 M_v^*M_v, $$
which in turn is equivalent to
$$(K\Delta_2 - Q)^*(\gamma^{-2}I_m + \gamma^{-4}RR^*) (K\Delta_2 - Q) \preceq  M_v^*M_v.$$
\clearpage 
Suppose $\gamma^{-2}I_m + \gamma^{-4}R(z)R^*(z^{-*})$ can be factored as $\Delta_3^*(z^{-*})\Delta_3(z)$,  where $\Delta_3(z)$ is causal; this factorization is obtained in Lemma \ref{center-factorization-lemma}. Then this condition is equivalent to \begin{equation} \label{rhs-regret-cond-transfer-operator-3}
(\Delta_3K\Delta_2 - \Delta_3 Q)^*(\Delta_3K\Delta_2 - \Delta_3Q) \preceq M_v^*M_v. 
\end{equation}
In order for condition (\ref{filter-regret-gamma-cond}) to hold, we must pick $K$ such that $\Delta_3(1)K(1)\Delta_2(1) = \Delta_3(1)Q(1)$; otherwise the regret incurred when $w = 0$ and $v$ is constant would be nonzero even though the energy in $w$ and the pathlength of $v$ are both zero. Notice that   $$\Delta_3(z)K(z)\Delta_2(z) - \Delta_3(z)Q(z) = (1 - z^{-1})(\tilde{K}(z)  - \tilde{Q}(z)), $$ where $\tilde{K}(z),  \tilde{Q}(z)$ satisfy $$\Delta_3(z)K(z)\Delta_2(z) = \Delta_3(1)K(1)\Delta_2(1)  + (1 - z^{-1})\tilde{K}(z)$$ $$ \Delta_3(z)Q(z) = \Delta_3(1)Q(1) + (1 - z^{-1})\tilde{Q}(z).$$ 
It is easy to see that $K(z)$ is causal if and only if $\tilde{K}(z)$ is causal. Therefore condition (\ref{rhs-regret-cond-transfer-operator-3}) is equivalent to \begin{equation} \label{rhs-regret-cond-transfer-operator-4}
(\tilde{K}(z)  - \tilde{Q}(z))^* (\tilde{K}(z)  - \tilde{Q}(z)) \preceq I.
\end{equation}
By Lemma 3, we can decompose $\Delta_3(z)Q(z)$ as 
$$ \Delta_3(z)Q(z) = 
  \hat{L}W_2C^*\Sigma_2^{-1/2} + \hat{L}(zI - \hat{A})^{-1}\hat{A}W_2
C^*\Sigma_2^{-1/2}  + \hat{L}W_2A_2^*(z^{-1}I - A_2^*)^{-1} C^*\Sigma_2^{-1/2},
$$
where $\hat{L}, \hat{A}, A_2$ and $W_2$ are defined in the lemma and $A_2$ is stable. Therefore $$\Delta_3(z)Q(z) - \Delta_3(1)Q(1) = (1 - z^{-1}) \tilde{Q}(z),$$ where $$\tilde{Q}(z) = S(z) + T(z),$$
and $$S(z) = -z\hat{L}(I-\hat{A})^{-1}(zI - \hat{A})^{-1}\hat{A}W_2
C^*\Sigma_2^{-1/2}$$ is causal and $$T(z) = \hat{L}W_2A_2^*(I-A_2^*)^{-1}(z^{-1}I - A_2^*)^{-1}C^*\Sigma_2^{-1/2} $$ is strictly anticausal. Define $$\hat{K}(z) = \tilde{K}(z) - S(z).$$ It is clear that $\hat{K}(z)$ is causal if and only if $\tilde{K}(z)$ is causal, and furthermore, condition (\ref{rhs-regret-cond-transfer-operator-4}) is equivalent to 
\begin{equation} \label{rhs-regret-cond-transfer-operator-5}
 \|\hat{K}(z)  - T(z)\|_{\infty} < 1.
\end{equation}
We recognize (\ref{rhs-regret-cond-transfer-operator-5}) as a Nehari problem.
Define 
\begin{equation} \label{fgh-def}
H = \hat{L}W_2A_2^*(I-A_2^*)^{-1}, \hspace{2mm} F =  A_2^*, \hspace{2mm} G = C^*\Sigma_2^{-1/2}. 
\end{equation}
Then $$T(z) = H(z^{-1}I - F)^{-1}G,$$ where $F$ is stable. 
Appealing to Theorem 2 in the main text, we see that the smallest possible value of $\gamma$ such that a solution to (\ref{rhs-regret-cond-transfer-operator-5}) exists is the smallest value of $\gamma$ such that $ \bar{\sigma}(Z\Pi) \leq 1$,
where $Z$ and $\Pi$ are solutions of the Lyapunov equations $$Z = F^*ZF + H^*H, \hspace{5mm} \Pi = F \Pi F^* + GG^*. $$ Let $\gamma^{\star}$ be this optimal value of $\gamma$. For all $\gamma \geq \gamma^*$, a corresponding $\gamma$-optimal solution $\hat{K}(z)$ to (\ref{rhs-regret-cond-transfer-operator-5})  is $$\hat{K}(z) = H\Pi(I + F_{\gamma}(zI - F_{\gamma})^{-1})K_{\gamma}, $$
where 
\begin{equation} \label{fgamma-kgamma-def}
F_{\gamma} = F^* - K_{\gamma}G^*, \hspace{5mm} K_{\gamma} = (I - F^* Z F \Pi)^{-1} F^* Z G.
\end{equation}
We hence obtain the following expression for $K(z)$ in terms of $\hat{K}(z)$: $$ \Delta_3^{-1}(z)\left(\Delta_3(1)Q(1) + (1-z^{-1})(\hat{K}(z) + T(z) )\right)\Delta_2^{-1}(z). $$ 
This describes a filter satisfying condition (\ref{filter-regret-gamma-cond}) in the $z$-domain; translating this transfer matrix back into time domain, we easily obtain the state-space model described in the theorem. The constant $\Delta_3(1)Q(1)$ is easily found from the definitions of $\Delta_3(z)$ and $Q(z)$.

\end{proof}

\section{MISSING LEMMAS}

\begin{lemma} \label{delta-factorization-lemma}
Let $H(z) = C(zI - A)^{-1}B,$ where $(A, B)$ is stabilizable and $(A, C)$ is detectable. The following canonical factorizations hold: $$ I + H^*(z^{-*})H(z) = \Delta_1^*(z^{-*}) \Delta_1(z), \hspace{5mm} I + H(z)H^*(z^{-*}) = \Delta_2(z) \Delta_2^*(z^{-*}), $$ where we define the causal operators $$\Delta_1(z) = \Sigma_1^{1/2}(I + K_1^*(zI - A)^{-1}B),  \hspace{5mm} \Delta_2(z) = (I + C(zI - A)^{-1}K_2)\Sigma_2^{1/2}$$ and define
$$ K_1 = \Sigma_1^{-1} B^* P_1 A, \hspace{5mm} \Sigma_1 = I + B^*P_1B,$$
$$ K_2 = AP_2C^*\Sigma_2^{-1}, \hspace{5mm} \Sigma_2 = I + CP_2C^*,$$
where $P_1$ and $P_2$  are the solutions to the Riccati equations $$C^*C - P_1 + A^*P_1A - A^* P_1 B (I + B^*P_1B)^{-1}  B^* P_1 A = 0$$ and $$BB^* - P_2 + AP_2A^* - AP_2C^* (I + CP_2C^*)^{-1} A^*P_2C = 0.$$ 

\end{lemma}
\begin{proof}
We first factor $ I + H^*(z^{-*})H(z)$ as $ \Delta_1^*(z^{-*}) \Delta_1(z)$. We rewrite $ I + H^*(z^{-*})H(z)$ as $$  \begin{bmatrix} B^*(z^{-1}I - A^*)^{-1} & I \end{bmatrix} \begin{bmatrix} C^*C & 0 \\ 0 & I \end{bmatrix} \begin{bmatrix} (zI - A)^{-1}B \\ I \end{bmatrix}.$$ 
In light of Lemma \ref{equivalence-class-lemma-1}, we see that this equals
$$  \begin{bmatrix} B^*(z^{-1}I - A^*)^{-1} & I \end{bmatrix} \Lambda_1(P_1) \begin{bmatrix} (zI - A)^{-1}B \\ I \end{bmatrix}.$$
where $P_1$ is an arbitrary Hermitian matrix and we define $$\Lambda_1(P_1) = \begin{bmatrix} C^*C - P_1 + A^*P_1A & A^*P_1B \\ B^*P_1A & I + B^*P_1B \end{bmatrix}.$$ Notice that the $\Lambda_1(P_1)$ can be factored as $$\begin{bmatrix} I & K_1^*(P_1) \\ 0 & I \end{bmatrix} \begin{bmatrix} \Gamma_1(P_1) & 0 \\ 0 & \Sigma_1(P_1) \end{bmatrix} \begin{bmatrix} I & 0 \\ K_1(P_1) & I \end{bmatrix}, $$
where we define $$\Gamma_1(P_1) =  C^*C - P_1 + A^*P_1A - K_1^*(P_1)\Sigma_1(P_1) K_1(P_1),$$ 
$$ K_1(P_1) = \Sigma_1(P_1)^{-1} B^* P_1 A, \hspace{5mm} \Sigma_1(P_1) = I + B^*P_1B.$$
By assumption, $(A, B)$ is stabilizable and $(A, C)$ is detectable, therefore $(A^*, C^*)$ is stabilizable and $(A^*, B^*)$ is detectable. It follows that the Riccati equation $\Gamma_1(P_1) = 0$ has a unique stabilizing solution (see, e.g. Theorem E.6.2 in ``Linear Estimation" by Kailath, Sayed, and Hassibi).  Suppose $P_1$ is chosen to be this solution, and define $K_1 = K_1(P_1)$, $\Sigma_1 = \Sigma_1(P_1)$. We immediately obtain the canonical factorization $ I + H^*(z^{-*})H(z)  =  \Delta_1^*(z^{-*})\Delta_1(z), $ where we define 
\begin{equation} \label{delta-1-definition}
\Delta_1(z) = \Sigma_1^{1/2}(I + K_1(zI - A)^{-1}B).
\end{equation}

The factorization $ I + H(z)H^*(z^{-*}) =  \Delta_2(z)\Delta_2^*(z^{-*})$ can be obtained almost identically. We rewrite $ I + H(z) H^*(z^{-*})$ as
$$  \begin{bmatrix} C(zI - A)^{-1} & I \end{bmatrix} \begin{bmatrix} BB^* & 0 \\ 0 & I \end{bmatrix} \begin{bmatrix} (z^{-1}I - A^*)^{-1}C^* \\ I \end{bmatrix}.$$
In light of Lemma \ref{equivalence-class-lemma-1}, we see that this equals
$$ \begin{bmatrix} C(zI - A)^{-1} & I \end{bmatrix} \Lambda_2(P_2) \begin{bmatrix} (z^{-1}I - A^*)^{-1}C^* \\ I \end{bmatrix},$$
where $P_2$ is an arbitrary Hermitian matrix and we define $$\Lambda_2(P_2) = \begin{bmatrix} BB^* - P_2 + AP_2A^* & AP_2C^* \\ CP_2A^* & I + CP_2C^*\end{bmatrix}.$$ Notice that the $\Lambda_2(P_2)$ can be factored as $$\begin{bmatrix} I & K_2(P_2) \\ 0 & I \end{bmatrix} \begin{bmatrix} \Gamma_2(P_2) & 0 \\ 0 & \Sigma_2(P_2) \end{bmatrix} \begin{bmatrix} I & 0 \\ K_2^*(P_2) & I \end{bmatrix}, $$
where we define $$\Gamma_2(P_2) = BB^* - P_2 + AP_2A^* - K_2(P_2)\Sigma_2(P_2) K_2^*(P_2),$$ 
$$ K_2(P_2) = AP_2C^*\Sigma_2(P_2)^{-1}, \hspace{5mm} \Sigma_2(P_2) = I + CP_2C^*.$$
By assumption, $(A, B)$ is stabilizable and $(A, C)$ is detectable, therefore the Riccati equation $\Gamma_2(P_2) = 0$ has a unique stabilizing solution.  Suppose $P_2$ is chosen to be this solution, and define $K_2 = K_2(P_2)$, $\Sigma_2 = \Sigma_2(P_2)$. We immediately obtain the canonical factorization $ I + H(z) H^*(z^{-*}) = \Delta_2(z) \Delta_2^*(z^{-*}), $ where we define 
\begin{equation} \label{delta-2-definition}
\Delta_2(z) = (I + C(zI - A)^{-1}K_2 )\Sigma_2^{1/2}.
\end{equation}
\end{proof}


\begin{lemma} \label{center-factorization-lemma}
The following canonical factorization holds: $$\gamma^{-2}I_m + \gamma^{-4}J\Delta_1^{-1}\Delta_1^{-*}J^* = \Delta_3^*\Delta_3,$$ where $\Delta_1(z)$ is defined as in Lemma \ref{delta-factorization-lemma} and $\Delta_3(z)$ is the causal operator $$\Delta_3(z) = \gamma^{-1}\Sigma_3^{1/2}(I + K_3(zI - A_1)^{-1}A_1W_1L^*), $$ where we define $A_1 = A - BK_1$ and $W_1$ is  the solution of the Lyapunov equation $$  \gamma^{-2}B\Sigma^{-1}B^* - W_1 + A_1W_1A_1^*  = 0,$$ and we define $K_3 = K_3(P_3)$ and $\Sigma_3 = \Sigma_3(P_3)$
where $$ K_3(P_3) = \Sigma_3(P_3)^{-1} (L  + LW_1A_1^*P_3 A_1), \hspace{5mm} \Sigma_3(P_3) = \Sigma_0 + LW_1A_1^*P_3 A_1W_1L^*$$ and $P_3$ is the solution of the Riccati equation 
$$-P_3 + A_1^*P_3A_1 - K_3^*(P_3)\Sigma_3(P_3) K_3(P_3) = 0.$$

\end{lemma}

\begin{proof}
Recall that $$\Delta_1(z) = \Sigma_1^{1/2}(I + K_1(zI - A)^{-1}B),$$ where $K_1, \Sigma_1$ are defined in Lemma \ref{delta-factorization-lemma}. 
Therefore $$J(z)\Delta_1^{-1}(z) = L(zI - A_1)^{-1}B\Sigma_1^{-1/2}, $$ where we define $A_1 = A - BK_1$. We can hence rewrite $I + \gamma^{-2}J(z)\Delta_1^{-1}(z)\Delta_1^{-*}(z^{-*})J^*(z^{-*})$ as $$  \begin{bmatrix} L(zI - A_1)^{-1} & I \end{bmatrix} \begin{bmatrix} \gamma^{-2}B\Sigma_1^{-1}B^* & 0 \\ 0 & I \end{bmatrix} \begin{bmatrix} (z^{-1}I - A_1^*)^{-1}L^* \\ I \end{bmatrix}.$$ In light of Lemma \ref{equivalence-class-lemma-2}, the center matrix can be replaced by $$\begin{bmatrix} \gamma^{-2}B\Sigma_1^{-1}B^* - W_1 + A_1PA_1^* & A_1W_1L^*  \\  LW_1A_1^* & I + LW_1L^* \end{bmatrix},  $$ where $W_1$ is an arbitrary Hermitian matrix of appropriate dimensions. Suppose $W_1$ is chosen as the solution of the Lyapunov equation $$  \gamma^{-2}B\Sigma^{-1}B^* - W_1 + A_1W_1A_1^*  = 0.$$ Let $A_1W_1L^* = A_1W_1L^*$. We see that $I + \gamma^{-2}J(z)\Delta_1^{-1}(z)\Delta_1^{-*}(z^{-*})J^*(z^{-*})$ can be written as $$I + LW_1L^* + L(zI - A_1)^{-1}A_1W_1L^* + LW_1A_1^*(z^{-1} - A_1^*)^{-1}L^*.$$ This in turn can be expressed as $$ \begin{bmatrix} LW_1A_1^*(z^{-1}I - A_1^*)^{-1} & I \end{bmatrix} \begin{bmatrix} 0 & L^* \\ L & \Sigma_0 \end{bmatrix} \begin{bmatrix} (zI - A_1)^{-1}A_1W_1L^* \\ I \end{bmatrix}, $$ where we define $\Sigma_0 = I + LW_1L^*$.  Applying Lemma \ref{equivalence-class-lemma-1}, the center matrix can be replaced by $$\Lambda_3(P_3) = \begin{bmatrix} -P_3 + A_1^*P_3A_1 & L^* + A_1^*P_3A_1W_1L^*\\ L  + LW_1A_1^*P_3 A_1 & I + LW_1L^* + LW_1A_1^*P_3 A_1W_1L^* \end{bmatrix},$$ where $P_3$ is an arbitrary Hermitian matrix. Notice that $\Lambda_3(P_3)$ factors as $$\begin{bmatrix} I & K_3^*(P_3) \\ 0 & I \end{bmatrix} \begin{bmatrix} \Gamma_3(P_3) & 0 \\ 0 & \Sigma_3(P_3) \end{bmatrix} \begin{bmatrix} I & 0 \\ K_3(P_3) & I \end{bmatrix}, $$
where we define $$\Gamma_3(P_3) =  -P_3 + A_1^*P_3A_1 - K_3^*(P_3)\Sigma_3(P_3) K_3(P_3),$$ 
$$ K_3(P_3) = \Sigma_3(P_3)^{-1} (L  + LW_1A_1^*P_3 A_1), \hspace{5mm} \Sigma_3(P_3) = \Sigma_0 + LW_1A_1^*P_3 A_1W_1L^*.$$
Notice that $A_1$ is stable, therefore the Riccati equation $\Gamma_3(P_3) = 0$ has a unique stabilizing solution.  Suppose $P_3$ is chosen to be this solution, and define $K_3 = K_3(P_3)$, $\Sigma_3 = \Sigma_3(P_3)$. We immediately obtain the canonical factorization 
$ \gamma^{-2}I_m + \gamma^{-4}J(z)\Delta_1^{-1}(z)\Delta_1^{-*}(z^{-*})J^*(z^{-*}) = \Delta_3^*(z^{-*})\Delta_3(z), $ where we define 
\begin{equation*}
\Delta_3(z) = \gamma^{-1}\Sigma_3^{1/2}(I + K_3(zI - A_1)^{-1}A_1W_1L^*).
\end{equation*}

\end{proof}

\begin{lemma} \label{Q-decomposition-lemma}
Let $\Delta_3(z)$ be defined as in Lemma \ref{center-factorization-lemma}. Then $\Delta_3(z)Q(z)$ can be decomposed into causal and anticausal components as 
$$ \Delta_3(z)Q(z) = 
  LW_2C^*\Sigma_2^{-1/2} + L(zI - A)^{-1}AW_2
C^*\Sigma_2^{-1/2} 
 + LW_2A_2^*(z^{-1}I - A_2^*)^{-1} C^*\Sigma_2^{-1/2},
$$
where $W_2$ is the solution of the Lyapunov equation $$ \hat{B}B^* - W_2 +  \hat{A}W_2A_2^* = 0,$$ and we define $K_1, K_2$ as in Lemma \ref{delta-factorization-lemma}, $K_3, \Sigma_3$ as in Lemma \ref{center-factorization-lemma} and let $A_1 = A - BK_1, A_2 = A - K_1C$ and $$\hat{A} = \begin{bmatrix} A_1 & A_1W_1L^*L \\ 0 & A \end{bmatrix}, \hspace{5mm} \hat{B} = \begin{bmatrix} 0 \\ B \end{bmatrix}, \hspace{5mm} \hat{L} = \begin{bmatrix} \gamma^{-1}\Sigma_3^{1/2}K_3 & \gamma^{-1} \Sigma_3^{1/2}L \end{bmatrix}.$$
\end{lemma}
\begin{proof}
In Lemma \ref{delta-factorization-lemma} we found $\Delta_2(z)$: $$\Delta_2(z) = (I + C(zI - A)^{-1}K_2)\Sigma_2^{1/2}. $$ 
Therefore $$\Delta_2^{-*}(z^{-*}) = (I - K_2^*(z^{-1} - A_2^*)^{-1}C^*)\Sigma_2^{-1/2}. $$ It follows that $$H^*(z^{-*})\Delta_2^*(z^{-*}) = B^*(z^{-1}I - A_2^*)^{-1}C^*\Sigma_2^{-1/2}. $$ 
Similarly, $$ \Delta_3(z)J(z) = \hat{L}(zI - \hat{A})^{-1}\hat{B},$$ where we define $$\hat{A} = \begin{bmatrix} A_1 & A_1W_1L^*L \\ 0 & A \end{bmatrix}, \hspace{5mm} \hat{B} = \begin{bmatrix} 0 \\ B \end{bmatrix}, \hspace{5mm} \hat{L} = \begin{bmatrix} \gamma^{-1}\Sigma_3^{1/2}K_3 & \gamma^{-1} \Sigma_3^{1/2}L \end{bmatrix}.$$
Therefore, $\Delta_3(z)Q(z)$ can be rewritten as $$ \begin{bmatrix} \hat{L}(zI - \hat{A})^{-1} & I \end{bmatrix} \begin{bmatrix} \hat{B}B^* & 0 \\ 0 & 0 \end{bmatrix} \begin{bmatrix} (z^{-1}I - A_2^*)^{-1} C^*\Sigma_2^{-1/2} \\ I \end{bmatrix}.$$
In light of Lemma \ref{general-equivalence-class-lemma}, we see that the center matrix can be replaced by $$\begin{bmatrix} \hat{B}B^* - W_2 + \hat{A}W_2A_2^* & \hat{A}W_2C^*\Sigma_2^{-1/2}  \\  \hat{L}W_2A_2^* & \hat{L}W_2C^*\Sigma_2^{-1/2} \end{bmatrix},  $$ where $W_2$ is an arbitrary matrix of appropriate dimension. We take $W_2$ to be the solution of the Sylvester equation $$ \hat{B}B^* - W_2 +  \hat{A}W_2A_2^* = 0,$$ and immediately obtain the desired decomposition of $\Delta_3(z)Q(z)$ into causal and anticausal components.
\end{proof}

\begin{lemma} \label{equivalence-class-lemma-1}
For all $H, F$ and all Hermitian matrices $P$, we have
$$\begin{bmatrix} H^*(z^{-1}I - F^*)^{-1} & I \end{bmatrix} \Omega(P) \begin{bmatrix} (zI - F)^{-1}H \\ I \end{bmatrix} = 0, $$ where we define $$\Omega(P) = \begin{bmatrix} -P + F^*PF & F^*PH \\ H^*PF & H^*PH \end{bmatrix}. $$
\end{lemma}

\begin{proof}
This identity is essentially the ``transpose" of Lemma \ref{equivalence-class-lemma-2} and is easily verified via direct calculation.
\end{proof}

\begin{lemma}\label{equivalence-class-lemma-2}
For all $H, F$ and all Hermitian matrices $P$, we have
$$\begin{bmatrix} H(zI - F)^{-1} & I \end{bmatrix} \Omega(P) \begin{bmatrix} (z^{-1}I - F^*)^{-1}H^* \\ I \end{bmatrix} = 0, $$ where we define $$\Omega(P) = \begin{bmatrix} -P + FPF^* & FPH^* \\ HPF^* & HPH^* \end{bmatrix}. $$
\end{lemma}

\begin{proof}
This identity is a special case of Lemma \ref{general-equivalence-class-lemma}; it also appears as Lemma 12.3.3 in ``Indefinite-Quadratic Estimation and Control" by Hassibi, Sayed, and Kailath.
\end{proof}

\begin{lemma} \label{general-equivalence-class-lemma}
For all $H_1, H_2, F_1, F_2$ and all matrices $W$, we have
$$\begin{bmatrix} H_1(zI - F_1)^{-1} & I \end{bmatrix} \Omega(W) \begin{bmatrix} (z^{-1}I - F_2^*)^{-1}H_2^* \\ I \end{bmatrix} = 0, $$ where we define $$\Omega(W) = \begin{bmatrix} -W + F_1WF_2^* & F_1WH_2^* \\ H_1WF_2^* & H_1WH_2^* \end{bmatrix}. $$
\end{lemma}

\begin{proof}
Notice that $\Omega(W)$ can be rewritten as $$\Omega(W) = \begin{bmatrix} F_1 \\ H_1 \end{bmatrix} W \begin{bmatrix} F_2 & H_2 \end{bmatrix} - \begin{bmatrix} I \\0 \end{bmatrix} W \begin{bmatrix} I & 0 \end{bmatrix}.$$
The proof is immediate after observing that $$\begin{bmatrix} H_1(zI - F_1)^{-1} & I \end{bmatrix}\begin{bmatrix} F_1 \\ H_1 \end{bmatrix} = H_1(zI - F_1)^{-1}z, \hspace{5mm} \begin{bmatrix} F_2 & H_2 \end{bmatrix} \begin{bmatrix} (z^{-1}I - F_2^*)^{-1}H_2^* \\ I \end{bmatrix} = z^{-1}(z^{-1}I - F_2^*)^{-1}H_2^*.$$
\end{proof}

\section{ADDITIONAL EXPERIMENTS}

\subsection{Pathlength-optimal control}
We include a few more experiments in the inverted pendulum system which didn't fit in the main text; we refer to the main text for details about this system. In Figure \ref{invpend-ones-fig}, the driving disturbance is constant, i.e. $w_t = 1$ for all $t$. This disturbance has zero pathlength; as expected, the pathlength-optimal controller performs the best out of all causal controllers. In Figures 1b-1d, we plot the relative performance of the various controllers when the driving disturbance is sinusoidal with period $2000\pi, 200\pi,$ and $2\pi$ (a sinusoidal disturbance with period $20\pi$ is considered in the main text). We see that the pathlength-optimal controller easily outperforms the $H_2$-optimal controllers and $H_{\infty}$-optimal controllers (the $H_{\infty}$-optimal controller incurs order of magnitudes more cost and is not plotted).

\begin{figure}[htb]
\centering
\begin{subfigure}{0.45\textwidth}
\includegraphics[width=\columnwidth]{pathoptimal_ones.pdf}
\caption{The driving disturbance is constant and hence has zero pathlength, although its energy is large.}
\label{invpend-ones-fig}
\end{subfigure} \hfil
\begin{subfigure}{0.45\textwidth}
\includegraphics[width=\columnwidth]{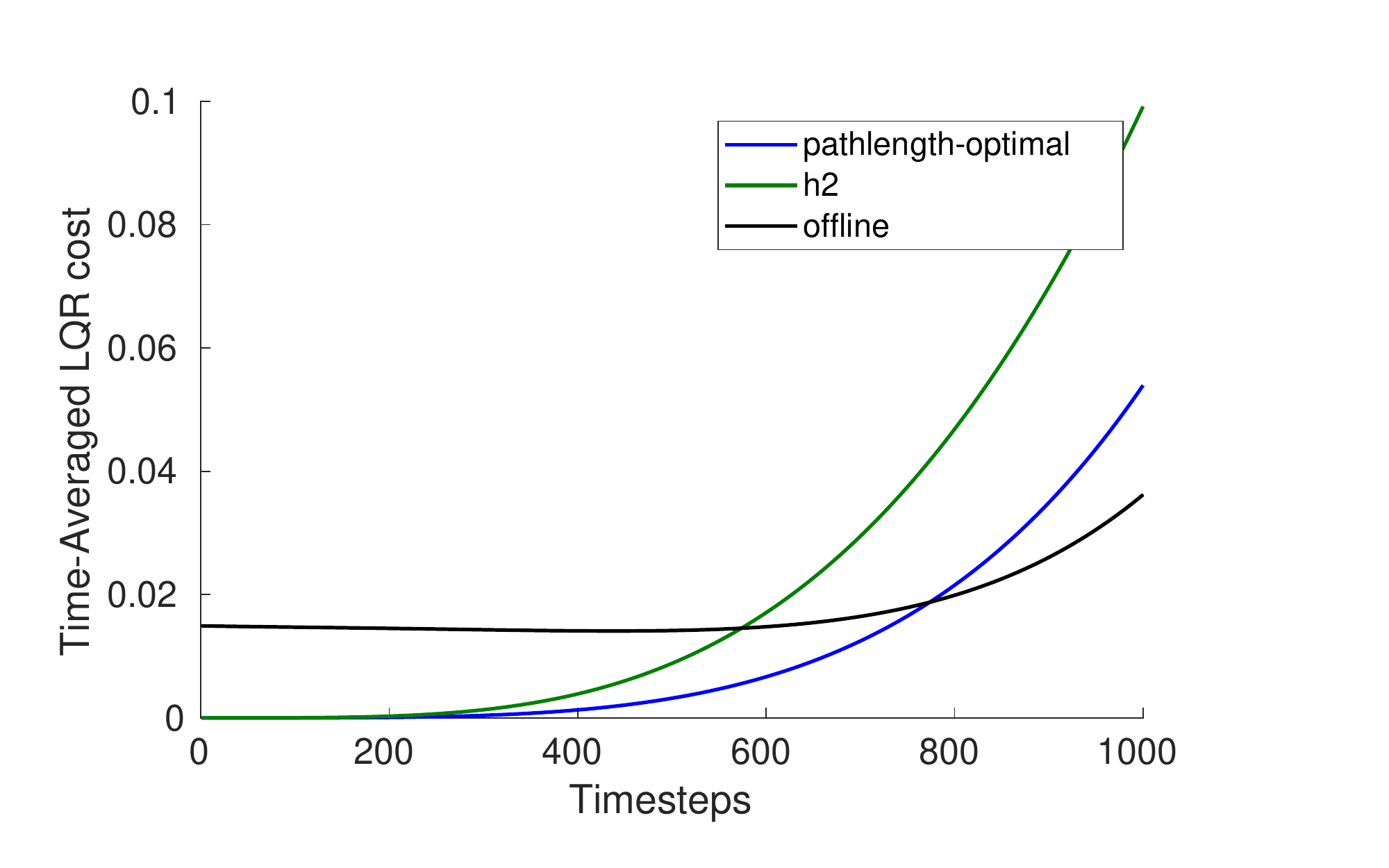}
\caption{The driving disturbance is sinusoidal with period $2000\pi$. }
\label{invpend-sine-4-fig}
\end{subfigure}
\begin{subfigure}{0.45\textwidth}
\includegraphics[width=\columnwidth]{pathlength_sine2.pdf}
\caption{The driving disturbance is sinusoidal with period $200\pi$. }
\label{invpend-sine-2-fig}
\end{subfigure}\hfil
\begin{subfigure}{0.45\textwidth}
\includegraphics[width=\columnwidth]{pathlength_sine3.pdf}
\caption{The driving disturbance is sinusoidal with period $2\pi$.}
\label{invpend-sine-3-fig}
\end{subfigure}
\label{extra-control-plots}
\caption{Relative performance of linear-quadratic (LQ) controllers.}
\end{figure}

\subsection{Pathlength-optimal filtering}
We consider a one-dimensional tracking problem, where the goal is to estimate an object's position given noisy observations of its trajectory. We consider the state-space model $$\begin{bmatrix} x_{t+1} \\ \nu_{t+1} \end{bmatrix} = \begin{bmatrix} 1 & \delta_t \\ 0 & 1 \end{bmatrix} \begin{bmatrix} x_t \\ \nu_t \end{bmatrix} +  \begin{bmatrix} 0 \\ \delta_t \end{bmatrix} \alpha_t, \hspace{5mm} y_t = x_t + v_t, \hspace{5mm} s_t = x_t,$$ where $x_t$ is the object's position, $\nu_t$ is the object's velocity, $\alpha_t$ is the object's instantaneous acceleration due to external forces, and $v_t$ is measurement noise. We take $\delta_t = 0.01$ and initialize $x_0 = 0$. In this system, the optimal value of $\gamma$ is $\gamma^{\star} \approx 35.64$.

We benchmark the performance of the pathlength-optimal filter against that of the Kalman filter (e.g. the $H_2$-optimal filter).  Recall that the key innovation of the pathlength-optimal filter relative to standard filters is that the pathlength-optimal filter is designed to achieve low regret when the measurement disturbance $v$ has low pathlength; for this reason, we focus on measuring how the performance varies across many different values of $v$. For simplicity, we take the driving disturbance $\alpha$ to be picked i.i.d from a standard Gaussian across all of our experiments. In Figure \ref{ones-fig}, we plot the relative performance of the pathlength-optimal filter against that of the Kalman filter when the measurement disturbance is constant, i.e. $v_t = 1$ for all $t$. This disturbance has high energy but zero pathlength - as expected, the pathlength-optimal filter easily beats the Kalman filter, achieving orders-of-magnitude less estimation error. In Figure \ref{sine-1-fig}, $v$ varies sinusoidally with period $200\pi$. This disturbance has very low pathlength, and in fact is temporally 0.01-Lipshitz, e.g. $\|v_{t+1} - v_t\| \leq 0.01$ for all $t$. Again, the pathlength-optimal filter outperforms the Kalman filter by orders of magnitude. We next consider a sinusoidal disturbance with period $20\pi$ in Figure \ref{sine-2-fig}. While this disturbance has a higher pathlength, the pathlength-optimal filter is still able to outperform the Kalman filter. Finally, in Figure \ref{sine-3-fig} we consider an adversarial case, where the measurement disturbance $v$ oscillates rapidly and hence has very high pathlength. We see that the pathlength-optimal filter is outperformed by the Kalman filter; this is entirely unsurprising, since the pathlength-optimal filter is specifically designed under the assumption that $v$ varies slowly over time. Together, these four plots are consistent with the message of this paper: when the pathlength of the measurement disturbance is small, the pathlength-optimal filter produces very accurate estimates of the state and outperforms standard filtering algorithms.

\begin{figure}[htb]
\centering
\begin{subfigure}{0.45\textwidth}
\includegraphics[width=\columnwidth]{ones_v.pdf}
\caption{The driving disturbance is drawn i.i.d from a standard Gaussian  and the measurement disturbance is constant ($v_t = 1$ for all $t$). The pathlength of the measurement disturbance is zero, though its energy is large.}
\label{ones-fig}
\end{subfigure} \hfil
\begin{subfigure}{0.45\textwidth}
\includegraphics[width=\columnwidth]{sine_v.pdf}
\caption{The driving disturbance is drawn i.i.d from a standard Gaussian  and the measurement disturbance varies sinusoidally with period $200\pi$. The pathlength of the measurement disturbance is small, relative to its energy.}
\label{sine-1-fig}
\end{subfigure}
\begin{subfigure}{0.45\textwidth}
\includegraphics[width=\columnwidth]{sine_2v.pdf}
\caption{The driving disturbance is drawn i.i.d from a standard Gaussian  and the measurement disturbance varies sinusoidally with period $20\pi$. The pathlength of the measurement disturbance is moderate, relative to its energy.}
\label{sine-2-fig}
\end{subfigure} \hfil
\begin{subfigure}{0.45\textwidth}
\includegraphics[width=\columnwidth]{sine_3v.pdf}
\caption{The driving disturbance is drawn i.i.d from a standard Gaussian  and the measurement disturbance varies sinusoidally with period $2\pi$. The pathlength of the measurement disturbance is large, relative to its energy.}
\label{sine-3-fig}
\end{subfigure}
\label{filter-plots}
\caption{Relative performance of the pathlength-optimal filter and the Kalman filter.}
\end{figure}


